\title{New Insights into Bootstrapping for Bandits}
\author{
Sharan Vaswani\\
University of British Columbia\\
\texttt{sharanv@cs.ubc.ca}\\
\And
Branislav Kveton \\
Adobe Research \\
\texttt{kveton@adobe.com} \\
\And
Zheng Wen \\
Adobe Research \\
\texttt{zwen@adobe.com} \\
\AND
Anup Rao\\
Adobe Research \\
\texttt{anuprao@adobe.com} \\
\And
Mark Schmidt \\
University of British Columbia \\
\texttt{schmidtm@cs.ubc.ca} \\
\And
Yasin Abbasi-Yadkori\\
Adobe Research \\
\texttt{abbasiya@adobe.com} \\
}
\DeclareMathOperator*{\argmax}{arg\,max}
\newtheorem{proposition}{Proposition}
\newtheorem{lemma}{Lemma}
\newtheorem{theorem}{Theorem}
\newcommand{\E}{\mathbb{E}}
\newcommand{\I}{\mathbb{I}}
\newcommand{\hth}{\hat{\theta}}
\newcommand{\tth}{\tilde{\theta}}
\newcommand{\GWB}{WB\xspace}
\newcommand{\NPB}{NPB\xspace}
\newcommand{\cD}{\mathcal{D}}
\newcommand{\cG}{\mathcal{H}}
\newcommand{\cH}{\mathcal{H}}
\newcommand{\cI}{\mathcal{I}}
\newcommand{\cP}{\mathcal{P}}
\newcommand{\cL}{\mathcal{L}}
\newcommand{\cT}{\mathcal{T}}
\newcommand{\bw}{{\bf w}}
\newcommand{\bx}{{\bf x}}
\newcommand{\by}{{\bf y}}
\begin{document}

\maketitle

\begin{abstract}
We investigate the use of bootstrapping in the bandit setting. We first show that the commonly used non-parametric bootstrapping (\NPB) procedure can be provably inefficient and establish a near-linear lower bound on the regret incurred by it under the bandit model with Bernoulli rewards. We show that \NPB with an appropriate amount of forced exploration can result in sub-linear albeit sub-optimal regret. As an alternative to \NPB, we propose a \emph{weighted bootstrapping} (\GWB) procedure. For Bernoulli rewards, \GWB with multiplicative exponential weights is mathematically equivalent to Thompson sampling (TS) and results in near-optimal regret bounds. Similarly, in the bandit setting with Gaussian rewards, we show that \GWB with additive Gaussian weights achieves near-optimal regret. Beyond these special cases, we show that \GWB leads to better empirical performance than TS for several reward distributions bounded on $[0,1]$. For the contextual bandit setting, we give practical guidelines that make bootstrapping simple and efficient to implement and result in good empirical performance on real-world datasets.
\end{abstract}
% -------------------------------------- DUMP ------------------------------------------------------------------- %
%In contrast to TS, \GWB does not require knowledge of the underlying reward distribution. 

\section{Introduction}
\label{sec:introduction}
% General Bandits
The multi-armed bandit framework~\cite{lai1985asymptotically,bubeck2012regret,auer2002using,auer2002finite} is a classic approach for sequential decision-making under uncertainty. The basic framework consists of independent \emph{arms} that correspond to different choices or actions. These may be different treatments in a clinical trial or different products that can be recommended to the users of an online service. Each arm has an associated expected \emph{reward} or utility. Typically, we do not have prior information about the utility of the available choices and the \emph{agent} learns to make ``good'' decisions via repeated interaction in a trial-and-error fashion. Under the bandit setting, in each interaction or \emph{round}, the agent selects an arm and observes a reward \emph{only} for the selected arm. The objective of the agent is to maximize the reward accumulated across multiple rounds. This results in an \emph{exploration-exploitation trade-off}: exploration means choosing an arm to gain more information about it, while exploitation corresponds to choosing the arm with the highest estimated reward so far. The \emph{contextual bandit} setting~\cite{wang2005bandit,pandey2007multi,kakade2008efficient,dani2008stochastic,li2010contextual,agrawal2013thompson} is a generalization of the bandit framework and assumes that we have additional information in the form of a feature vector or ``context'' at each round. A context might be used to encode the medical data of a patient in a clinical trial or the demographics of an online user of a recommender system. In this case, the expected reward for an arm is an unknown function~\footnote{We typically assume a parametric form for this function and infer the corresponding parameters from observations.} of the context at that particular round. For example, for \emph{linear bandits}~\cite{rusmevichientong2010linearly,dani2008stochastic,abbasi2011improved}, this function is assumed to be linear implying that the expected reward can be expressed as an inner product between the context vector and an (unknown) parameter to be learned from observations. 

% Non-Linear and problems with existing algorithms
In both the bandit and contextual bandit settings, there are three main strategies for addressing the exploration-exploitation tradeoff: (i) $\epsilon$-greedy~\cite{langford2008epoch} (ii) optimism-in-the-face-of-uncertainty~\cite{auer2002using,abbasi2011improved} (OFU) and (iii) Thompson sampling~\cite{agrawal2013thompson}. Though $\epsilon$-greedy (EG) is simple to implement and is widely used in practice, it results in sub-optimal performance from a theoretical stand-point. In practice, its performance heavily relies on choosing the right exploration parameter and the strategy for annealing it. Strategies based on optimism under uncertainty rely on constructing confidence sets and are statistically optimal and computationally efficient in the bandit~\cite{auer2002finite} and linear bandit~\cite{abbasi2011improved} settings. However, for non-linear feature-reward mappings, we can construct only approximate confidence sets~\cite{filippi2010parametric,li2017provable,zhang2016online,jun2017scalable} that result in over-conservative uncertainty estimates~\cite{filippi2010parametric} and consequently to worse empirical performance. Given a prior distribution over the rewards or parameters being inferred, Thompson sampling (TS) uses the observed rewards to compute a posterior distribution. It then uses samples from the posterior to make decisions. TS is computationally efficient when we have a closed-form posterior like in the case of Bernoulli or Gaussian rewards. For reward distributions beyond those admitting conjugate priors or for complex non-linear feature-reward mappings, it is not possible to have a closed form posterior or obtain exact samples from it. In these cases, we have to rely on computationally-expensive approximate sampling techniques~\cite{riquelme2018deep}. 

%% Why Bootstrapping 
To address the above difficulties, bootstrapping~\cite{efron1992bootstrap} has been used in the bandit~\cite{baransi2014sub,eckles2014thompson}, contextual bandit~\cite{tang2015personalized,elmachtoub2017practical} and deep reinforcement learning~\cite{osband2015bootstrapped,osband2016deep} settings. All previous work uses \emph{non-parametric bootstrapping} (explained in Section~\ref{sec:npb-proc}) as an approximation to TS. As opposed to maintaining the entire posterior distribution for TS, bootstrapping requires computing only point-estimates (such as the maximum likelihood estimator). Bootstrapping thus has two major advantages over other existing strategies: (i) Unlike OFU and TS, it is simple to implement and does not require designing problem-specific confidence sets or efficient sampling algorithms. (ii) Unlike EG, it is not sensitive to hyper-parameter tuning. In spite of its advantages and good empirical performance, bootstrapping for bandits is not well understood theoretically, even under special settings of the bandit problem. Indeed, to the best of our knowledge,~\cite{elmachtoub2017practical} is the only work that attempts to theoretically analyze the non-parametric bootstrapping (referred to as \NPB) procedure. For the bandit setting with Bernoulli rewards and a Beta prior (henceforth referred to as the Bernoulli bandit setting), they prove that both TS and \NPB will take similar actions as the number of rounds increases. However, this does not have any implication on the regret for \NPB.  

In this work, we first show that the \NPB procedure used in the previous work is provably inefficient in the Bernoulli bandit setting (Section~\ref{sec:lb}). In particular, we establish a near-linear lower bound on the incurred regret. In Section~\ref{sec:forced}, we show that \NPB with an appropriate amount of forced exploration (done in practice in~\cite{elmachtoub2017practical,tang2015personalized}) can result in a sub-linear though sub-optimal upper bound on the regret. As an alternative to \NPB, we propose the \emph{weighted bootstrapping} (abbreviated as \GWB) procedure. For Bernoulli (or more generally categorical) rewards, we show that \GWB with multiplicative exponential weights is mathematically equivalent to TS and thus results in near-optimal regret. Similarly, for Gaussian rewards, \GWB with additive Gaussian weights is equivalent to TS with an uninformative prior and also attains near-optimal regret. 

In Section~\ref{sec:experiments}, we empirically show that for several reward distributions on $[0,1]$, \GWB outperforms TS with a randomized rounding procedure proposed in~\cite{agrawal2013thompson}. In the contextual bandit setting, we give two implementation guidelines. To improve the \emph{computational efficiency} of bootstrapping, prior work~\cite{eckles2014thompson,elmachtoub2017practical,tang2015personalized} approximated it by an ensemble of models that requires additional hyperparameter tuning, such as choosing the size of the ensemble; or problem-specific heuristics, for example \cite{elmachtoub2017practical} uses a lazy update procedure specific to decision trees. We find that with appropriate stochastic optimization, bootstrapping (without any approximation) is computationally efficient and simple to implement. Our second guideline is for the  initialization of the bootstrapping procedure. Prior work~\cite{elmachtoub2017practical,tang2015personalized} used \emph{forced exploration} at the beginning of bootstrapping, by pulling each arm for some number of times or by adding pseudo-examples. This involves tuning additional hyper-parameters, for example, \cite{elmachtoub2017practical} pull each arm $30$ times before bootstrapping. Similarly, the number of pseudo-examples or the procedure for generating them is rather arbitrary. We propose a simple method for generating such examples and experimentally validate that using $O(d)$ pseudo-examples, where $d$ is the dimension of the context vector, leads to consistently good performance. These contributions result in a simple and efficient implementation of the bootstrapping procedure. We experimentally evaluate bootstrapping with several parametric models and real-world datasets.

\section{Background}
We describe the framework for the contextual bandit problem in Section~\ref{sec:prob-def}. In Section~\ref{sec:bootstrapping}, we give the necessary background on bootstrapping and then explain its adaptation to bandits in Section~\ref{sec:bb}.

\subsection{Bandits Framework}
\label{sec:prob-def}
The bandit setting consists of $K$ arms where each arm $j$ has an underlying (unknown) reward distribution. The protocol for a bandit problem is as follows: in each round $t = 1 \dots T$, the bandit algorithm selects an arm $j_t$. It then receives a reward $r_t$ sampled from the underlying reward distribution for the selected arm $j_t$. The \emph{best} or optimal arm is defined as the one with the highest \emph{expected} reward. The aim of the bandit algorithm is to maximize the expected cumulative reward,
%(the expectation is over the reward distributions) 
or alternatively, to minimize the \emph{expected cumulative regret}. The cumulative regret $R(T)$ is the cumulative loss in the reward across $T$ rounds because of the lack of knowledge of the optimal arm. 

In the contextual bandit setting~\cite{langford2008epoch,li2017provable,chu2011contextual}, the expected reward at round $t$ depends on the \emph{context} or feature vector $\bx_t \in \mathbb{R}^d$. Specifically, each arm $j$ is parametrized by the (unknown) vector $\theta^{*}_{j} \in \mathbb{R}^d$ and its expected reward at round $t$ is given by $m(x_t,\theta^*_j)$ i.e. $\E[r_t | j_t = j] = m(x_t, \theta_j^*)$. Here the function $m(\cdot, \cdot): \mathbb{R}^{d} \times \mathbb{R}^{d} \rightarrow \mathbb{R}$ is referred to as the \emph{model class}. Given these definitions, the expected cumulative regret is defined as follows:
\begin{align}
\E[R(T)] = \sum_{t = 1}^{T} \left[ \max_{j} \left[ m(x_t, \theta_j^*) \right] - m(x_t, \theta_{j_t}^*) \right]
\label{eq:cb-cumu-regret}
\end{align}
The standard bandit setting is a special case of the above framework. To see this, if $\mu_j$ denotes the expected reward of arm $j$ of the $K$-arm bandit, then it can be obtained by setting $d=1$, $x_t=1$ for all $t$ and 
$m(x_t, \theta_j^*) =\theta_j^* = \mu_j$ for all $j$. Assuming that arm $1$ is the optimal arm, i.e.  $\mu_1 \geq \mu_j$ $\forall j$, then the expected cumulative regret is defined as: $\E[R(T)] = T \mu_1 - \sum_{t = 1}^{T} \E \left[ r_{t} \right]$. Throughout this paper, we describe our algorithm under the general contextual bandit framework, but develop our theoretical results under the simpler bandit setting.
% from it as follows: let the model class be linear, implying that $m(x_t, \theta_j^*) = \langle x_t, \theta_j^* \rangle$. If we set $d = K$, $x_t$ to be the all-ones vector for all $t$, and let $\theta^* = [\theta_1^*, \theta_2^*,  \ldots \theta_K^*]$ be the $K \times K$ diagonal matrix, then $\E[r_t | j_t = j] = \mu_j$ where $\theta^*_{j,j} = \mu_{j}$. 
 %
%This implies that in the bandit setting, the expected reward for arm $j$ is fixed across rounds and equal to $\mu_j$. 
%
%we consider the contextual bandit framework to describe the algorithm but use the simpler bandit setting for our theoretical results. 
\subsection{Bootstrapping}
\label{sec:bootstrapping}
In this section, we set up some notation and describe the bootstrapping procedure in the offline setting. Assume we have a set of $n$ data-points denoted by $\cD = \{\bx_{i}, y_{i}\}$. Here, $\bx_{i}$ and $y_{i}$ refer to the feature vector and observation (alternatively label) for the $i^{th}$ point. We assume a parametric generative model (parametrized by $\theta$) from the features $\bx$ to the observations $y$. Given $\cD$, the log-likelihood of observing the data is given by $\cL(\theta) = \sum_{i \in \cD} \log \left[ \cP( y_i | x_i, \theta ) \right]$ where $\cP( y_i | x_i, \theta )$ is the probability of observing label $y_i$ given the feature vector $\bx_i$, under the model parameters $\theta$. In the absence of features, the probability of observing $y_i$ (for all $i \in [n]$) is given by $\cP(y_i | \theta)$. The maximum likelihood estimator (MLE) for the observed data is defined as $\hth \in \argmax_{\theta} \cL(\theta)$. In this paper, we mostly focus on Bernoulli observations without features in which case, $\hth = \frac{\sum_{i \in \cD} y_{i}}{n}$. 

Bootstrapping is typically used to obtain uncertainty estimates for a model fit to data. The general bootstrapping procedure consists of two steps: (i) Formulate a \emph{bootstrapping log-likelihood} function $\tilde{\cL}(\theta, Z)$ by injecting stochasticity into $\cL(\cdot)$ via the random variable $Z$ such that $\E_{Z} \left[ \tilde{\cL}(\theta, Z) \right] = \cL(\theta)$. (ii) Given $Z = z$, generate a \emph{bootstrap sample} $\tth$ as: $\tth \in \argmax_{\theta} \tilde{\cL}(\theta, z)$. In the offline setting~\cite{friedman2001elements}, these steps are repeated $B$ (usually $B = 10^4$) times to obtain the set $\{ \tth^{1}, \tth^{2}, \ldots \tth^{B} \}$. The variance of these samples is then used to estimate the uncertainty in the model parameters $\hth$. Unlike a Bayesian approach that requires characterizing the entire posterior distribution in order to compute uncertainty estimates, bootstrapping only requires computing point-estimates (maximizers of the bootstrapped log-likelihood functions). In Sections~\ref{sec:npb} and~\ref{sec:gen-boot}, we discuss two specific bootstrapping procedures. 

\subsection{Bootstrapping for Bandits}
\label{sec:bb}
\begin{algorithm}[ht]
\begin{algorithmic}[1]
   \STATE{\textbf{Input}: $K$ arms, Model class $m$}  
   \STATE Initialize history: $\forall j \in [K]$, $\cD_{j} = \{\}$
    \FOR{$t=1$ {\bfseries to} $T$}    
		% Get context
	    \STATE Observe context vector $\bx_{t}$    
    	% Sample
    	\STATE For all $j$, compute bootstrap sample $\tth_{j}$ \label{eq:boot-sample} (According to Sections~\ref{sec:npb} and~\ref{sec:gen-boot})
    	% Select arm 
		\STATE Select arm: $j_t = \argmax_{j \in [K]} m( \bx_{t}, \tth_{j})$
		\label{eq:arm-sel}		
		% Get reward
		\STATE Observe reward $r_{t}$ % such that $\E[r_t | j_t = j] = m(x_t, \theta_j^*)$
   		\label{eq:get-obs}    		
	    % Update
		\STATE Update history: $\cD_{j_{t}} = \cD_{j_{t}} \cup \{ \bx_{t}, r_{t} \}$
		\label{eq:update}	    
	\ENDFOR
\end{algorithmic}
\caption{Bootstrapping for contextual bandits}
\label{algo:bb}
\end{algorithm}
In the bandit setting, the work in~\cite{eckles2014thompson,tang2015personalized,elmachtoub2017practical} uses bootstrapping as an approximation to Thompson sampling (TS). The basic idea is to compute \emph{one bootstrap sample} and treat it as a sample from an underlying posterior distribution in order to emulate TS. In Algorithm~\ref{algo:bb}, we describe the procedure for the contextual bandit setting. At every round $t$, the set $\cD_{j}$ consists of the features and observations obtained on pulling arm $j$ in the previous rounds. The algorithm (in line $5$) uses the set $\cD_{j}$ to compute a bootstrap sample $\tth_{j}$ for each arm $j$. Given the bootstrap sample for each arm,  the algorithm (similar to TS) selects the arm $j_t$ maximizing the reward conditioned on this bootstrap sample (line $6$). After obtaining the observation (line $7$), the algorithm updates the set of observations for the selected arm (line $8$). In the subsequent sections, we instantiate the procedures for generating the bootstrap sample $\tth_{j}$ and analyze the performance of the algorithm in these settings. 

\section{Non-parametric Bootstrapping}
\label{sec:npb}
We first describe the non-parametric bootstrapping (\NPB) procedure in Section~\ref{sec:npb-proc}. We show that \NPB used in conjunction with Algorithm~\ref{algo:bb} can be provably inefficient and establish a near-linear lower bound on the regret incurred by it in the Bernoulli bandit setting (Section~\ref{sec:lb}). In Section~\ref{sec:forced}, we show that \NPB with an appropriate amount of forced exploration can result in an $O(T^{2/3})$ regret in this setting.
\subsection{Procedure}
\label{sec:npb-proc}
In order to construct the bootstrap sample $\tth_{j}$ in Algorithm~\ref{algo:bb}, we first create a new dataset $\tilde{\cD}_{j}$ by \emph{sampling with replacement}, $\vert \cD_{j} \vert$ points from $\cD_{j}$. The bootstrapped log-likelihood is equal to the log-likelihood of observing $\tilde{\cD}_{j}$. Formally, 
\begin{align}
\tilde{\cL}(\theta) = \sum_{i \in \tilde{\cD}_{j}} \log \left[ \cP( y_i | x_i, \theta ) \right] 
\label{eq:npb}
\end{align}
The bootstrap sample is computed as $\tth_{j} \in \argmax_{\theta} \tilde{\cL}(\theta)$. Observe that the sampling with replacement procedure is the source of randomness for bootstrapping and $\E[\tilde{\cD}_{j}] = \cD_{j}$. 

For the special case of Bernoulli rewards without features, a common practice is to use \emph{Laplace smoothing} where we generate positive ($1$) or negative ($0$) \emph{pseudo-examples} to be used in addition to the observed labels. Laplace smoothing is associated with two non-negative integers $\alpha_0, \beta_0$, where $\alpha_0$ (and $\beta_0$) is the \emph{pseudo-count}, equal to the number of positive (or negative) pseudo-examples. These pseudo-counts are used to ``simulate'' the prior distribution $Beta(\alpha_0, \beta_0)$. For the \NPB procedure with Bernoulli rewards, generating $\tth_{j}$ is equivalent to sampling from a Binomial distribution $Bin(n,p)$ where $n = \vert \cD_{j} \vert$ and the success probability $p$ is equal to the fraction of positive observations in $\cD_{j}$. Formally, if the number of positive observations in $\cD_{j}$ is equal to $\alpha$, then 
\begin{align}
Z \sim \mathrm{Bino} \left( n + \alpha_0 + \beta_{0}, \frac{\alpha_{0} + \alpha}{n + \alpha_0 + \beta_{0}} \right)  \quad \mathrm{and} \quad \tth_{j} = \frac{Z}{n + \alpha_0 + \beta_{0}}
\label{eq:npb-ber} 
\end{align}
\subsection{Inefficiency of Non-Parametric Bootstrapping}
\label{sec:lb}
In this subsection, we formally show that Algorithm~\ref{algo:bb} used with \NPB  might lead to an $\Omega(T^\gamma)$ regret with $\gamma$ arbitrarily close to $1$. Specifically, we consider a simple $2$-arm bandit setting, where at each round $t$, the reward of arm $1$ is independently drawn from a Bernoulli distribution with mean $\mu_1 = 1/2$, and the reward of arm $2$ is deterministic and equal to $1/4$. Furthermore, we assume that the agent knows the deterministic reward of arm $2$, but not the mean reward for arm $1$. Notice that this case is simpler than the standard two-arm Bernoulli bandit setting, in the sense that the agent also knows the reward of arm $2$. Observe that if $\tth_{1}$ is a bootstrap sample for arm $1$ (obtained according to equation~\ref{eq:npb-ber}), then the arm $1$ is selected if $\tth_{1} \geq 1/4$. Under this setting, we prove the following lower bound:
\begin{theorem}
\label{thm:lb}
If the \NPB procedure is used in the above-described % two-arm bandit 
case with pseudo-counts $(\alpha_0, \beta_0)=(1,1)$ for arm $1$, then for any $\gamma \in (0, 1)$ and any 
$T \geq \exp \left[
\frac{2}{\gamma}  \exp \left(\frac{80}{\gamma} \right)
\right ]  $,
we obtain
\[
\E[R(T)]  > \frac{T^{1-\gamma}}{32} =\Omega(T^{1-\gamma}).
\]
\end{theorem}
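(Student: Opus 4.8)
The plan is to reduce the regret bound to a statement about how rarely arm~1 is pulled, and then exhibit a ``trapping'' event of probability roughly $T^{-\gamma}$ on which arm~1 is essentially never pulled after an unlucky start. Since arm~2 is deterministic with reward $1/4$ and arm~1 has mean $\mu_1 = 1/2$, every pull of arm~2 contributes exactly $1/4$ to the pseudo-regret of \eqref{eq:cb-cumu-regret} while every pull of arm~1 contributes $0$. Writing $N_1(T)$ and $N_2(T) = T - N_1(T)$ for the number of pulls of each arm, this gives the exact identity $\E[R(T)] = \tfrac14 \E[N_2(T)] = \tfrac14\big(T - \E[N_1(T)]\big)$, so it suffices to show that with probability at least $\approx T^{-\gamma}$ the learner pulls arm~1 only a handful of times over the entire horizon.

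First I would analyze the ``all-failures'' states. Because arm~1 is $\mathrm{Bernoulli}(1/2)$ and the outcomes of its pulls are i.i.d.\ and independent of the bootstrap randomness, the event $E_k$ that the first $k$ pulls of arm~1 all return $0$ has probability exactly $2^{-k}$. On $E_k$, after arm~1 has been pulled $k$ times its history is $k$ zeros, so by~\eqref{eq:npb-ber} the per-round probability that arm~1 is selected (i.e.\ that $\tth_{1} \ge 1/4$) equals $p_k := \Pr\big[\mathrm{Bino}(k+2, \tfrac{1}{k+2}) \ge \tfrac{k+2}{4}\big]$. The key estimate is that $p_k$ decays super-exponentially: bounding the binomial tail by its leading term gives $p_k \le 1/\lceil (k+2)/4\rceil!$, which is eventually far below $1/T$. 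Crucially, while arm~2 is being pulled the history of arm~1 is frozen and a fresh bootstrap sample is drawn each round, so the selection indicators for arm~1 are i.i.d.\ $\mathrm{Bernoulli}(p_k)$; hence, starting from the state of $k$ zeros, the chance that arm~1 is ever pulled again within the remaining horizon is at most $p_k\,T$ by a union bound.

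Next I would pick the trap index. Choosing $k^\ast \approx 1 + \gamma\log_2 T$ makes the entry probability $\Pr[E_{k^\ast-1}] = 2^{-(k^\ast-1)} \ge \tfrac12 T^{-\gamma}$, while the super-exponential decay of $p_{k^\ast-1}$ is used to guarantee $p_{k^\ast-1}\,T \le 1/2$. On the event $E_{k^\ast-1}\cap\{N_1(T) < k^\ast\}$ the learner pulls arm~1 at most $k^\ast-1$ times, so $N_2(T) \ge T - k^\ast \ge T/2$. Since $\Pr[N_1(T)\ge k^\ast \mid E_{k^\ast-1}] \le p_{k^\ast-1}\,T \le 1/2$ by the frozen-history argument, this event has probability at least $\tfrac12\Pr[E_{k^\ast-1}] \ge \tfrac12 T^{-\gamma}$, and the regret identity then yields $\E[R(T)] \ge \tfrac14\cdot\tfrac12 T^{-\gamma}\cdot\tfrac{T}{2} = \tfrac{1}{16}T^{1-\gamma} \ge \tfrac{1}{32}T^{1-\gamma} = \Omega(T^{1-\gamma})$.

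The main obstacle is that the two requirements on $k^\ast$ pull in opposite directions: the entry probability $2^{-k^\ast}$ wants $k^\ast$ small (so the trap is likely), whereas forcing $p_{k^\ast-1}T \le 1/2$ wants $k^\ast$ large (so the learner cannot climb out). Reconciling $2^{-k^\ast} \ge \tfrac12 T^{-\gamma}$ with $p_{k^\ast-1} \le 1/(2T)$ is exactly where the doubly-exponential lower bound $T \ge \exp[\tfrac{2}{\gamma}\exp(80/\gamma)]$ enters: via Stirling, the bound $p_{k}\le 1/\lceil (k+2)/4\rceil!$ forces $k^\ast = \Omega\big(\tfrac{\log T}{\log\log T}\big)$, and this can still satisfy $k^\ast \le 1+\gamma\log_2 T$ only when $\log\log T \gtrsim 1/\gamma$. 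Getting this tension to close with an explicit constant --- in particular producing a clean enough super-exponential tail bound on $p_k$ --- is the crux; everything else is bookkeeping around the observation that the frozen history makes the per-round selection indicators i.i.d.
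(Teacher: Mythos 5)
Your proposal is correct, and its skeleton matches the paper's: both arguments hinge on the trap event that the first $m \approx \gamma \log T$ pulls of arm $1$ all return zero (probability $2^{-m}$), combined with a super-exponentially small, $\exp(-\Theta(m \log m))$, per-round probability of selecting arm $1$ from that frozen state, and both close by checking that the doubly-exponential condition on $T$ reconciles the two competing demands on $m$. Where you genuinely depart from the paper is in how the two key technical lemmas are implemented. First, for the binomial tail you use the elementary union bound $\Pr\left[\mathrm{Bino}\left(k+2, \tfrac{1}{k+2}\right) \geq j\right] \leq \binom{k+2}{j}(k+2)^{-j} \leq 1/j!$ with $j = \lceil (k+2)/4 \rceil$, whereas the paper invokes the Chernoff--KL bound of \cref{proposition:tail_bound} and lower-bounds the divergence $D\left(\tfrac{1}{4} \middle\| \tfrac{1}{m+2}\right)$ by $\tfrac{1}{20}\log(m+2)$ in \cref{lemma:action_prob}; your bound is more elementary (no citation needed) and quantitatively stronger (exponent roughly $\tfrac{k}{4}\log k$ versus $\tfrac{k}{20}\log k$). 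Second, to convert the small escape probability into regret, you union-bound over the remaining horizon to show that, conditionally on the trap, arm $1$ is \emph{never} pulled again with probability at least $1/2$, and then apply the exact accounting identity $\E[R(T)] = \tfrac{1}{4}\E[N_2(T)]$; the paper instead computes the expectation of a truncated geometric random variable (\cref{lemma:truncated_geo}) and runs a stopping-time case analysis on whether $\tau \leq T/2$ (\cref{lemma:lb_m}). Your route avoids both of those lemmas entirely, at the negligible price of needing the stronger per-round condition $p_{k^\ast-1} T \leq 1/2$ rather than merely $p_{k^\ast-1} < 1/2$, which the hypothesis on $T$ supplies with enormous slack. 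Your frozen-history/conditional-independence justification for the union bound is exactly right and is the same observation the paper uses implicitly when it asserts the truncated geometric law.

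One bookkeeping correction: with $k^\ast = \lceil 1 + \gamma \log_2 T \rceil$ you have $\Pr[E_{k^\ast-1}] \geq \tfrac{1}{2}T^{-\gamma}$, so the trap event $E_{k^\ast-1} \cap \{N_1(T) < k^\ast\}$ has probability at least $\tfrac{1}{4}T^{-\gamma}$, not $\tfrac{1}{2}T^{-\gamma}$ as written. The final chain then reads $\tfrac{1}{4}\cdot\tfrac{1}{4}T^{-\gamma}\cdot\tfrac{T}{2} = \tfrac{1}{32}T^{1-\gamma}$, which still delivers the claimed bound (and strictly so, since $T - k^\ast$ exceeds $T/2$ with plenty of room), but the factor-of-two slack you budgeted at the end is exactly consumed by this slip.
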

\begin{proof}
Please refer to Appendix~\ref{sec:lb-proof} for the detailed proof of Theorem~\ref{thm:lb}. It is proved based on a binomial tail bound (Proposition~\ref{proposition:tail_bound}) and uses the following observation: under a ``bad history", where at round $\tau$ \NPB has pulled arm $1$ for $m$ times, but all of these $m$ pulls have resulted in a reward $0$, \NPB will pull arm $1$ with probability less than $\exp \left( -m \log(m) / 20 \right)$ (Lemma~\ref{lemma:action_prob}). Hence, the number of times \NPB will pull the suboptimal arm $2$ before it pulls arm $1$ again or reach the end of the $T$ time steps follows a ``truncated geometric distribution", whose expected value is bounded in Lemma~\ref{lemma:truncated_geo}. Based on Lemma~\ref{lemma:truncated_geo}, and the fact that the probability of this bad history is $2^{-m}$, we have 
$
\E \left[ R(T) \right] \geq 2^{-(m+3)} \min \left \{
 \exp \left( m \log(m)/20 \right), T/4 
\right \}$ 
in Lemma~\ref{lemma:lb_m}. Theorem~\ref{thm:lb} is proved by setting $
m= \left \lceil \gamma \log(T)/2 \right \rceil
$. 
\end{proof}
Theorem~\ref{thm:lb} shows that, when $T$ is large enough, the \NPB procedure used in previous work~\cite{eckles2014thompson,tang2015personalized,elmachtoub2017practical} incurs an expected cumulative regret arbitrarily close to a linear regret in the order of $T$. 
% In order to verify this claim, we performed a numerical simulation for above described setting (refer to Appendix~\ref{app:lb-exps} for the corresponding plots). 
It is straightforward to prove a variant of this lower bound with any constant (in terms of $T$) number of pseudo-examples.
% The lower bound holds even when using a constant (in terms of $T$) number of pseudo-counts. 
Next, we show that \NPB with appropriate forced exploration can result in sub-linear regret. 

\subsection{Forced Exploration}
\label{sec:forced}
In this subsection, we show that \NPB, when coupled with an appropriate amount of forced exploration, can result in sub-linear regret in the Bernoulli bandit setting. In order to \emph{force} exploration, we pull each arm $m$ times before starting Algorithm~\ref{algo:bb}. The following theorem shows that for an appropriate value of $m$, this strategy can result in an $O(T^{2/3})$ upper bound on the regret. 
\begin{theorem}
\label{thm:forced-ub} In any $2$-armed bandit setting, if each arm is initially pulled $\displaystyle m = \left\lceil\left(\frac{16 \log T}{T}\right)^\frac{1}{3}\right\rceil$ times before starting Algorithm~\ref{algo:bb}, then 
\begin{align*}
  \E[R(T)] = O(T^{2/3})\,.
\end{align*}
\end{theorem}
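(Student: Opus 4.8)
The plan is to run an explore-then-bootstrap analysis, decomposing $\E[R(T)]$ into the regret of the initial $2m$ forced pulls and the regret of the rounds $t > 2m$ in which Algorithm~\ref{algo:bb} actually runs. Assume without loss of generality that arm $1$ is optimal, and write $\Delta = \mu_1 - \mu_2 > 0$. The forced-exploration phase pulls the suboptimal arm exactly $m$ times, contributing at most $m\Delta \le m$ to the regret; since the balance point of the analysis turns out to be $m \asymp T^{2/3}$ (up to logarithmic factors, which is the scale at which the stated $m$ is chosen), this part is already $O(T^{2/3})$. The remaining task is to bound $\E[n_2(T)]$, the expected number of pulls of arm $2$ after exploration, because the post-exploration regret equals $\Delta$ times this count.

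For each round $t > 2m$, arm $2$ is selected only when $\tth_2 \ge \tth_1$, so I would bound $\Pr(\tth_2 \ge \tth_1)$ by separating the deviation of each bootstrap sample from the true mean into a \emph{bootstrap} error and an \emph{estimation} error via $|\tth_j - \mu_j| \le |\tth_j - \hat\mu_j| + |\hat\mu_j - \mu_j|$, where $\hat\mu_j$ is the empirical mean of $\cD_j$. Conditioned on the current data, the NPB sample $\tth_j$ is the mean of $n_j$ points resampled with replacement from $\cD_j$ (Section~\ref{sec:npb-proc}), i.e.\ an average of conditionally i.i.d.\ variables in $[0,1]$ with conditional mean $\hat\mu_j$, so Hoeffding gives $\Pr(|\tth_j - \hat\mu_j| > \Delta/4 \mid \cD_j) \le 2\exp(-n_j\Delta^2/8)$. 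Because each arm is pulled at least $m$ times we always have $n_j \ge m$, and a short case check shows that when both empirical means satisfy $|\hat\mu_j - \mu_j| \le \Delta/4$, a selection of arm $2$ forces one of the two bootstrap deviations to exceed $\Delta/4$. Hence, conditioned on ``good'' data, the per-round mistake probability is at most $4\exp(-m\Delta^2/8) = \exp(-\Theta(m\Delta^2))$.

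Summing over the at most $T$ post-exploration rounds, the good-data contribution to the regret is $O\big(T\Delta\exp(-\Theta(m\Delta^2))\big)$. Maximizing this expression over $\Delta$ (the maximizer is $\Delta \asymp 1/\sqrt{m}$) yields an $O(T/\sqrt{m})$ bound that holds for every instance simultaneously, and balancing the exploration term $m$ against $T/\sqrt{m}$ gives $m \asymp T^{2/3}$ and total regret $O(T^{2/3})$. Equivalently, one can avoid the optimization and split into the regimes $\Delta \le \sqrt{\log T/m}$, where the trivial bound $T\Delta$ is already $O(T^{2/3})$, and $\Delta > \sqrt{\log T/m}$, where each round selects arm $2$ with polynomially small probability so the total contribution is negligible; this is the route that naturally produces the logarithmic factor appearing in the prescribed $m$.

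The main obstacle is that the empirical means $\hat\mu_j$ are formed from an \emph{adaptively chosen} number of samples, so the clean Hoeffding bound on $|\hat\mu_j - \mu_j|$ cannot be invoked at the data-dependent round $t$ without justification. I would resolve this by viewing the rewards of each arm as a fixed i.i.d.\ stream indexed by pull count, so that $\{\hat\mu_j^{(n)}\}_{n \ge m}$ is an ordinary running average, and then applying a time-uniform (maximal) concentration inequality to control $\sup_{n \ge m}|\hat\mu_j^{(n)} - \mu_j|$ across all rounds at once; this is the step requiring the most care, and it is where the $\log T$ enters when one insists the failure probability be summable over all $T$ rounds. Finally, the Laplace pseudo-counts $(\alpha_0,\beta_0)$ shift $\tth_j$ by only $O(1/m)$ and are absorbed into the constants, so they do not affect the $O(T^{2/3})$ conclusion.
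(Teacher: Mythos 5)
Your proposal is correct and takes essentially the same route as the paper's proof in Appendix B: the same decomposition into forced-exploration regret plus post-exploration regret, Hoeffding bounds applied both to the empirical means and to the conditionally i.i.d.\ bootstrap resamples, a union bound over rounds (equivalently, over sample sizes) to handle the adaptively growing histories, and a gap-threshold case split in which your $\sqrt{\log T/m}$ threshold plays exactly the role of the paper's tunable parameter $\tilde{\Delta}$, followed by balancing to obtain $O(T^{2/3})$. The only cosmetic difference is your alternative "maximize over $\Delta$" phrasing, which, as you yourself note, must in the end be replaced by the logarithmic regime split to absorb the union-bound term coming from adaptivity.
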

\begin{proof}
The claim is proved in \cref{app:forced-ub} based on the following observation: If the gap of the suboptimal arm is large, the prescribed $m$ steps are sufficient to guarantee that the bootstrap sample of the optimal arm is higher than that of the suboptimal arm with a high probability at any round $t$. On the other hand, if the gap of the suboptimal arm is small, no algorithm can have high regret.
\end{proof}
Although we can remedy the \NPB procedure using this strategy, it results in a sub-optimal regret bound. In the next section, we consider a weighted bootstrapping approach as an alternative to \NPB. 
\section{Weighted Bootstrapping}
\label{sec:gen-boot}
In this section, we propose weighted bootstrapping (\GWB) as an alternative to the non-parametric bootstrap. We first describe the weighted bootstrapping procedure in Section~\ref{sec:wb-proc}. For the bandit setting with Bernoulli rewards, we show the mathematical equivalence between \GWB and TS, hence proving that \GWB attains near-optimal regret (Section~\ref{sec:WB-Bandits}).  

\subsection{Procedure}
\label{sec:wb-proc}
In order to formulate the bootstrapped log-likelihood, we use a \emph{random transformation} of the labels in the corresponding log-likelihood function. First, consider the case of Bernoulli observations where the labels $y_i \in \{0,1\}$. In this case, the log-likelihood function is given by:
\begin{align*}
\cL(\theta) = \sum_{i \in \cD_{j} } y_i \log \left( g \left( \langle \bx_i, \theta \rangle \right) \right) + (1 - y_i) \log \left( 1 - g \left( \langle \bx_i, \theta \rangle \right) \right)
\end{align*}
where the function $g(\cdot)$ is the inverse-link function. For each observation $i$, we sample a random weight $w_i$ from an exponential distribution, specifically, for all $i \in \cD_{j}$, $w_i \sim Exp(1)$. We use the following transformation of the labels: $y_i :\rightarrow w_i \cdot y_i$ and $(1 - y_i) :\rightarrow w_i \cdot (1 - y_i)$. Since we transform the labels by multiplying them with exponential weights, we refer to this case as \emph{\GWB with multiplicative exponential weights}. Observe that this transformation procedure extends the domain for the labels from values in $\{0,1\}$ to those in $\mathbb{R}$ and does not result in a valid probability mass function. However, below, we describe several advantages of using this transformation. 

Given this transformation, the bootstrapped log-likelihood function is defined as:
\begin{align}
\tilde{\cL}(\theta) = \sum_{i \in \cD_{j} } w_i \underbrace{ \left[ y_i \log \left( g \left( \langle \bx_i, \theta \rangle \right) \right) + (1 - y_i) \log \left( 1 - g \left( \langle \bx_i, \theta \rangle \right) \right) \right] }_{\ell_{i}(\theta)} = \sum_{i \in \cD_{j} } w_i \cdot l_{i}(\theta)
\label{eq:gwb}
\end{align}
Here $\ell_{i}$ is the log-likelihood of observing point $i$. As before, the bootstrap sample is computed as: $\tth_{j} \in \argmax_{\theta} \tilde{\cL}(\theta)$. Note that in \GWB, the randomness for bootstrapping is induced by the weights $w$ and that $\E_{w}[\tilde{\cL}(\theta)] = \cL(\theta)$. As a special case, in the absence of features, when $g \left( \langle \bx_i, \theta \rangle \right) = \theta$ for all $i$, assuming $\alpha_0$ positive and $\beta_0$ negative pseudo-counts and denoting $n = \vert \cD_{j} \vert$ , we obtain the following closed-form expression for computing the bootstrap sample:
\begin{align}
\tth & = \frac{ \sum_{i = 1}^{n} [ w_i \cdot y_i ] + \sum_{i = 1}^{\alpha_0} [w_i] }{ \sum_{i = 1}^{n + \alpha_0 + \beta_0} w_i } \label{eq:WB-ber}
\end{align}

Using the above transformation has the following advantages: (i) Using equation~\ref{eq:gwb}, we can interpret $\tilde{\cL}(\theta)$ as a random re-weighting (by the weights $w_i$) of the observations. This formulation is equivalent to the weighted likelihood bootstrapping procedure proposed and proven to be asymptotically consistent in the offline case in~\cite{newton1994approximate}. (ii) From an implementation perspective, computing $\tth_{j}$ involves solving a weighted maximum likelihood estimation problem. It thus has the same computational complexity as \NPB and can be solved by using black-box optimization routines. (iii) In the next section, we show that using \GWB with multiplicative exponential weights has good theoretical properties in the bandit setting. Furthermore, such a procedure of randomly transforming the labels lends itself naturally to the Gaussian case and in Appendix~\ref{app:normal}, we show that \GWB with an additive transformation using Gaussian weights is equivalent to TS. 
\subsection{Equivalence to Thompson sampling}
\label{sec:WB-Bandits}
We now analyze the theoretical performance of \GWB in the Bernoulli bandit setting. In the following proposition proved in appendix~\ref{app:bernoulli}, we show that \GWB with multiplicative exponential weights is equivalent to TS.
\begin{proposition}
If the rewards $y_i \sim Ber(\theta^{*})$, then weighted bootstrapping using the estimator in equation~\ref{eq:WB-ber} results in $\tth_{j} \sim Beta( \alpha + \alpha_{0}, \beta + \beta_0 )$, where $\alpha$ and $\beta$ is the number of positive and negative observations respectively; $\alpha_0$ and $\beta_0$ are the positive and negative pseudo-counts. In this case, \GWB is equivalent to Thompson sampling under the $Beta(\alpha_0,\beta_0)$ prior.
\label{prop:bernoulli-ts-equivalence}
\end{proposition}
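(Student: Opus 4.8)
The plan is to reduce the random quantity in equation~\ref{eq:WB-ber} to a ratio of independent Gamma variables and then invoke the classical Gamma--Beta relationship. First I would observe that both the numerator and the denominator of $\tth$ are sums of independent $Exp(1)$ weights, and that the weights partition cleanly into ``positive'' and ``negative'' groups. Concretely, among the $n$ real observations exactly $\alpha$ satisfy $y_i = 1$, so $\sum_{i=1}^{n} w_i y_i$ is a sum of $\alpha$ i.i.d.\ $Exp(1)$ variables; adding $\sum_{i=1}^{\alpha_0} w_i$ (the $\alpha_0$ positive pseudo-examples) yields a total positive weight $A$ that is a sum of $\alpha + \alpha_0$ i.i.d.\ $Exp(1)$ variables. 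The remaining weights---those attached to the $\beta = n - \alpha$ real zeros and the $\beta_0$ negative pseudo-examples---sum to $B$, a total of $\beta + \beta_0$ i.i.d.\ $Exp(1)$ variables. Since each of the $n + \alpha_0 + \beta_0$ weights enters exactly one of $A$ or $B$, the two sums are independent, the denominator equals $A + B$, and hence $\tth = A/(A+B)$.

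Next I would use the fact that a sum of $k$ independent $Exp(1)$ random variables is $\mathrm{Gamma}(k,1)$ distributed, so $A \sim \mathrm{Gamma}(\alpha + \alpha_0, 1)$ and $B \sim \mathrm{Gamma}(\beta + \beta_0, 1)$ with $A \perp B$. The central step is the standard identity that for independent $A \sim \mathrm{Gamma}(a,1)$ and $B \sim \mathrm{Gamma}(b,1)$ one has $A/(A+B) \sim Beta(a,b)$. I would establish it by the change of variables $(A,B) \mapsto (U,S) = \left( A/(A+B),\, A+B \right)$: substituting $A = US$ and $B = (1-U)S$, whose Jacobian is $S$, into the product density proportional to $A^{a-1} e^{-A} B^{b-1} e^{-B}$ factorizes the joint law of $(U,S)$ into a $Beta(a,b)$ density in $U$ times a $\mathrm{Gamma}(a+b,1)$ density in $S$; marginalizing over $S$ gives $U \sim Beta(a,b)$. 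Applying this with $a = \alpha + \alpha_0$ and $b = \beta + \beta_0$ yields $\tth \sim Beta(\alpha + \alpha_0,\, \beta + \beta_0)$, as claimed.

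Finally, I would connect this to Thompson sampling through Beta--Bernoulli conjugacy: with a $Beta(\alpha_0,\beta_0)$ prior on $\theta^*$ and $\alpha$ observed successes and $\beta$ observed failures drawn from $Ber(\theta^*)$, the posterior is exactly $Beta(\alpha + \alpha_0,\, \beta + \beta_0)$, which is precisely the distribution from which TS draws. Thus each \GWB sample is distributed identically to a TS sample, establishing the stated equivalence.

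I anticipate the only real obstacle to be bookkeeping rather than depth: one must verify that the index ranges in equation~\ref{eq:WB-ber} genuinely realize the partition into $\alpha + \alpha_0$ positive and $\beta + \beta_0$ negative \emph{independent} weights---in particular that $\sum_{i=1}^{n} w_i y_i$ selects precisely the $\alpha$ weights sitting on the positive observations, with no weight counted in both $A$ and $B$. The Gamma--Beta computation itself is classical, and the concluding equivalence is immediate from conjugacy.
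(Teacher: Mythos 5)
Your proposal is correct and follows essentially the same route as the paper's own proof: split the exponential weights into the positive group (of size $\alpha + \alpha_0$) and the negative group (of size $\beta + \beta_0$), identify each sum as a $\mathrm{Gamma}(\cdot,1)$ variable, apply the Gamma--Beta ratio identity, and conclude by Beta--Bernoulli conjugacy. The only difference is that you spell out the change-of-variables derivation of $A/(A+B) \sim Beta(a,b)$, which the paper simply invokes, and your bookkeeping of which pseudo-counts attach to which group is in fact cleaner than the paper's (whose labels $P_0$ and $P_1$ are swapped relative to the pseudo-counts they absorb).
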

Since \GWB is mathematically equivalent to TS, the bounds in~\cite{agrawal2013further} imply near-optimal regret for \GWB in the Bernoulli bandit setting.

In Appendix~\ref{app:categorical}, we show that this equivalence extends to the more general categorical (with $C$ categories) reward distribution  i.e. for $y_i \in \{1, \ldots C \}$. In appendix~\ref{app:normal}, we prove that for Gaussian rewards, \GWB with additive Gaussian weights, i.e. $w_i \sim N(0,1)$ and using the additive transformation $y_i :\rightarrow y_i + w_i$, is equivalent to TS under an uninformative $N(0,\infty)$ prior. Furthermore, this equivalence holds even in the presence of features, i.e. in the linear bandit case. Using the results in~\cite{agrawal2013thompson}, this implies that for Gaussian rewards, \GWB with additive Gaussian weights achieves near-optimal regret.   
% ----------------------------------- DUMP -------------------------------------------------------------------- %
%In the general bandit case, when the reward distributions do not admit conjugate priors, TS requires computationally expensive approximate sampling procedures. In contrast, even in these settings, \GWB is efficient and can be directly used without any modification. This advantage carries over to the contextual bandit setting when using complicated feature-reward mappings such as neural networks. In the next section, we show that for reward distributions bounded on the $[0,1]$ interval, \GWB with multiplicative exponential weights leads to better performance than TS with randomized rounding, a strategy proposed and proved to be optimal in~\cite{agrawal2013further}. Furthermore, in Section~\ref{sec:experiments}, we show that \GWB leads to good empirical performance for several real-world contextual bandit problems requiring complex model classes. 

%in the Bernoulli case is given as:
%\begin{align*}
%\tilde{L}(\theta) & = \sum_{i \in \cD} \left[ \cT(y_i,w_i) \log \left( g \left( \langle \bx_i, \theta \rangle \right) \right) + \cT(1 - y_i,w_i) \log \left( 1 - g \left( \langle \bx_i, \theta \rangle \right) \right) \right]
%\end{align*}
%Note that if the weights $w_i \sim Multinomial(1/n,n)$, we recover the NPB procedure. Interestingly, if the weights are generated as differences between consecutive sorted uniform $U[0,1]$ random variables, we recover the Bayesian bootstrap in~\cite{rubin1981bayesian}. 

\section{Experiments}
\label{sec:experiments}
In Section~\ref{sec:bandits-exps}, we first compare the empirical performance of bootstrapping and Thompson sampling in the bandit setting. In section~\ref{sec:cb-exps}, we describe the experimental setup for the contextual bandit setting and compare the performance of different algorithms under different feature-reward mappings. 

\subsection{Bandit setting}
\label{sec:bandits-exps}
\begin{figure*}[!ht]
\centering
        \subfigure[Bernoulli]
        {
			\includegraphics[width=0.23\textwidth]{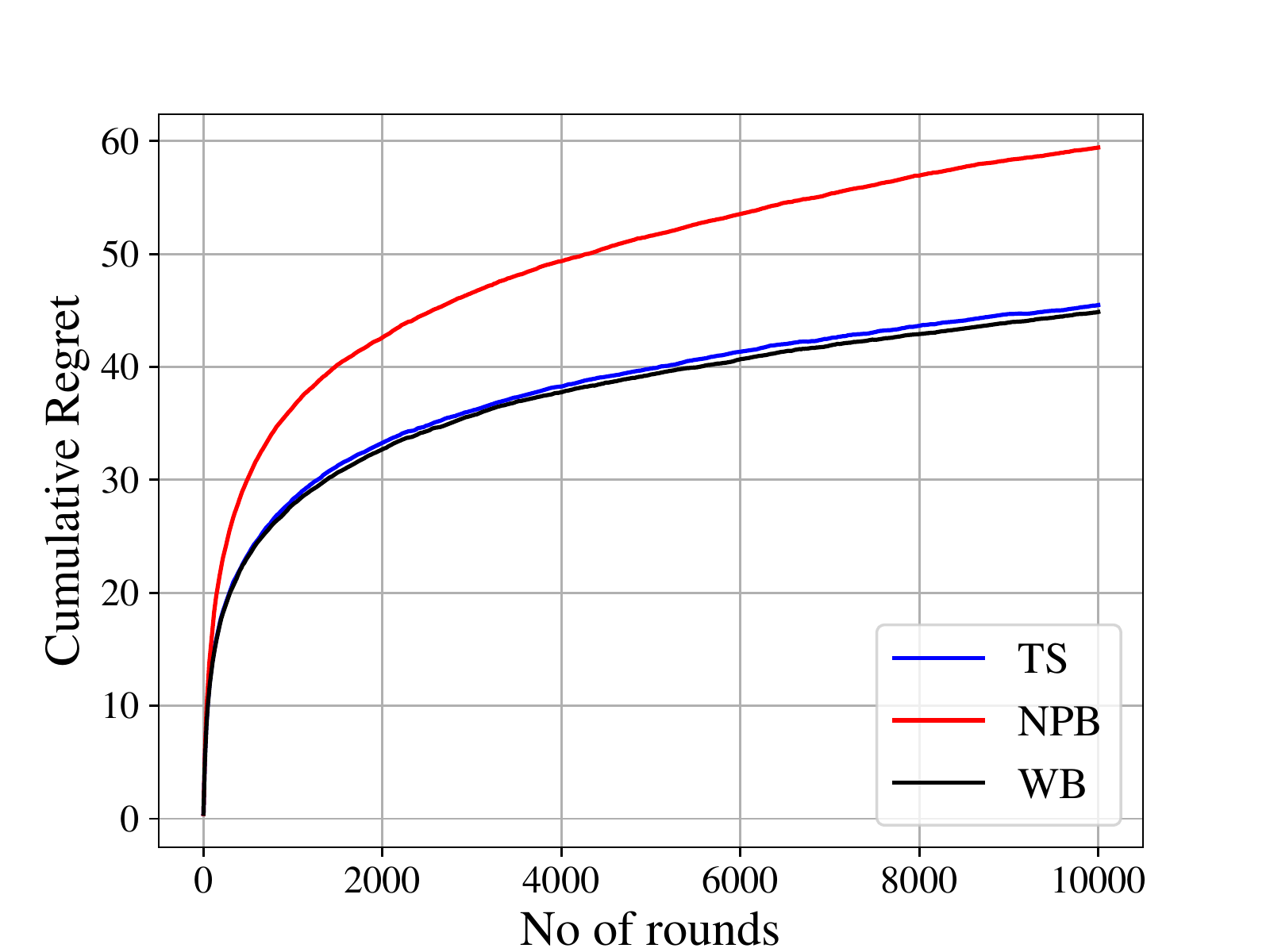}
			\label{fig:mab-bernoulli-10}
        }        
        \subfigure[Truncated-Normal]
        {
			\includegraphics[width=0.23\textwidth]{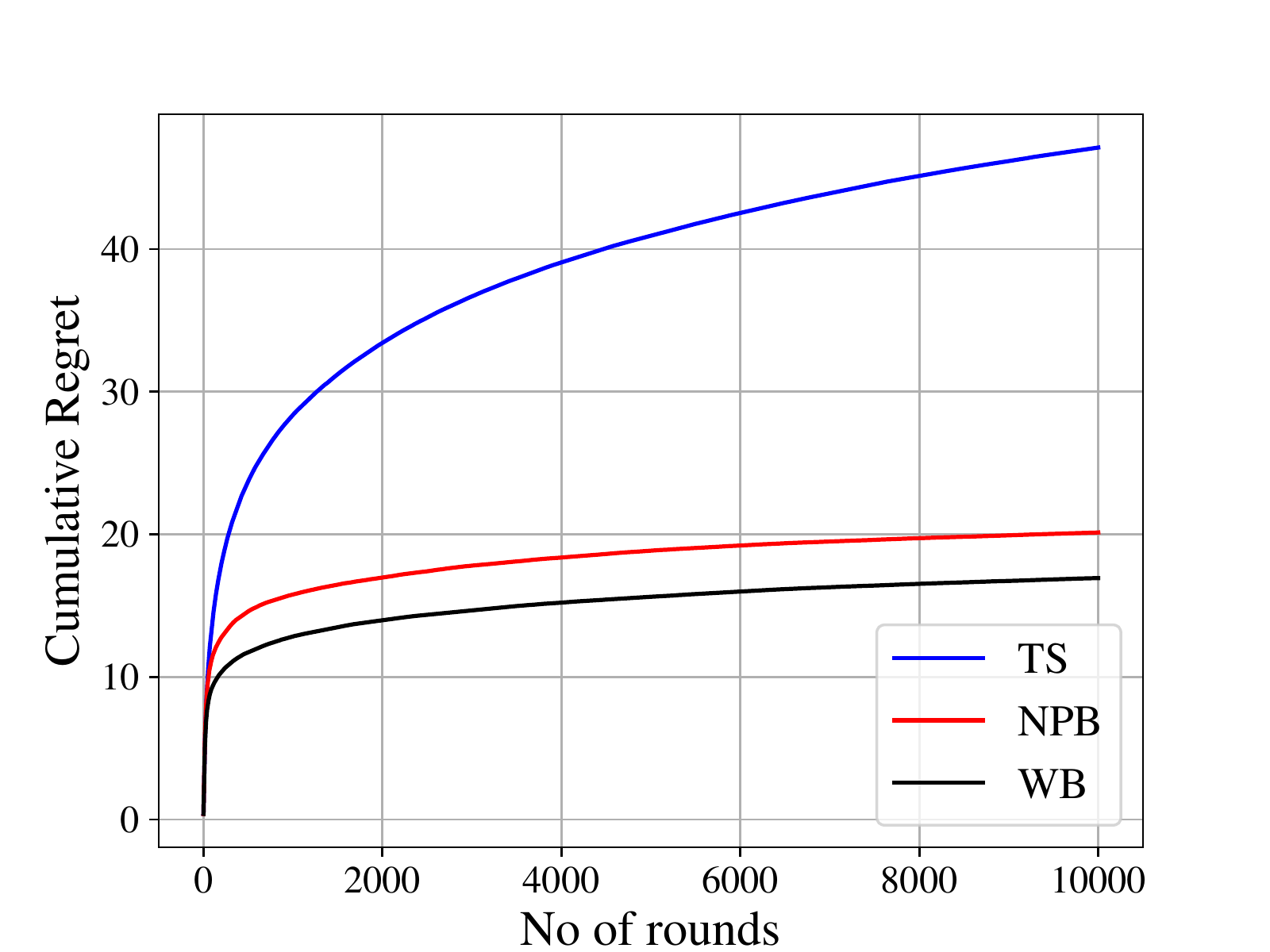}
			\label{fig:mab-truncated-normal-10}
        }              	
       \subfigure[Beta]
        {
			\includegraphics[width=0.23\textwidth]{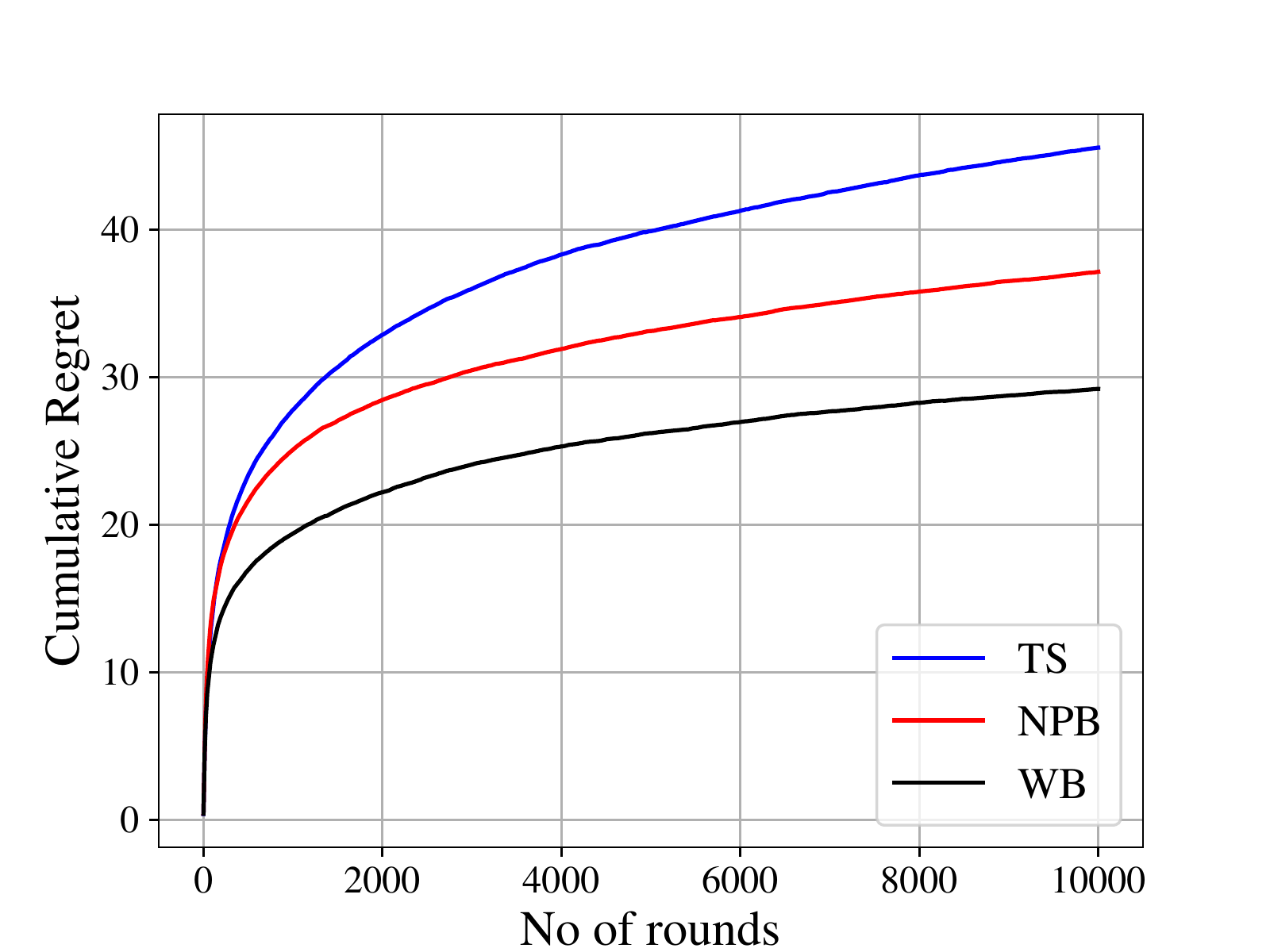}
			\label{fig:mab-beta-10}
        }              	
        \subfigure[Triangular]
        {
			\includegraphics[width=0.23\textwidth]{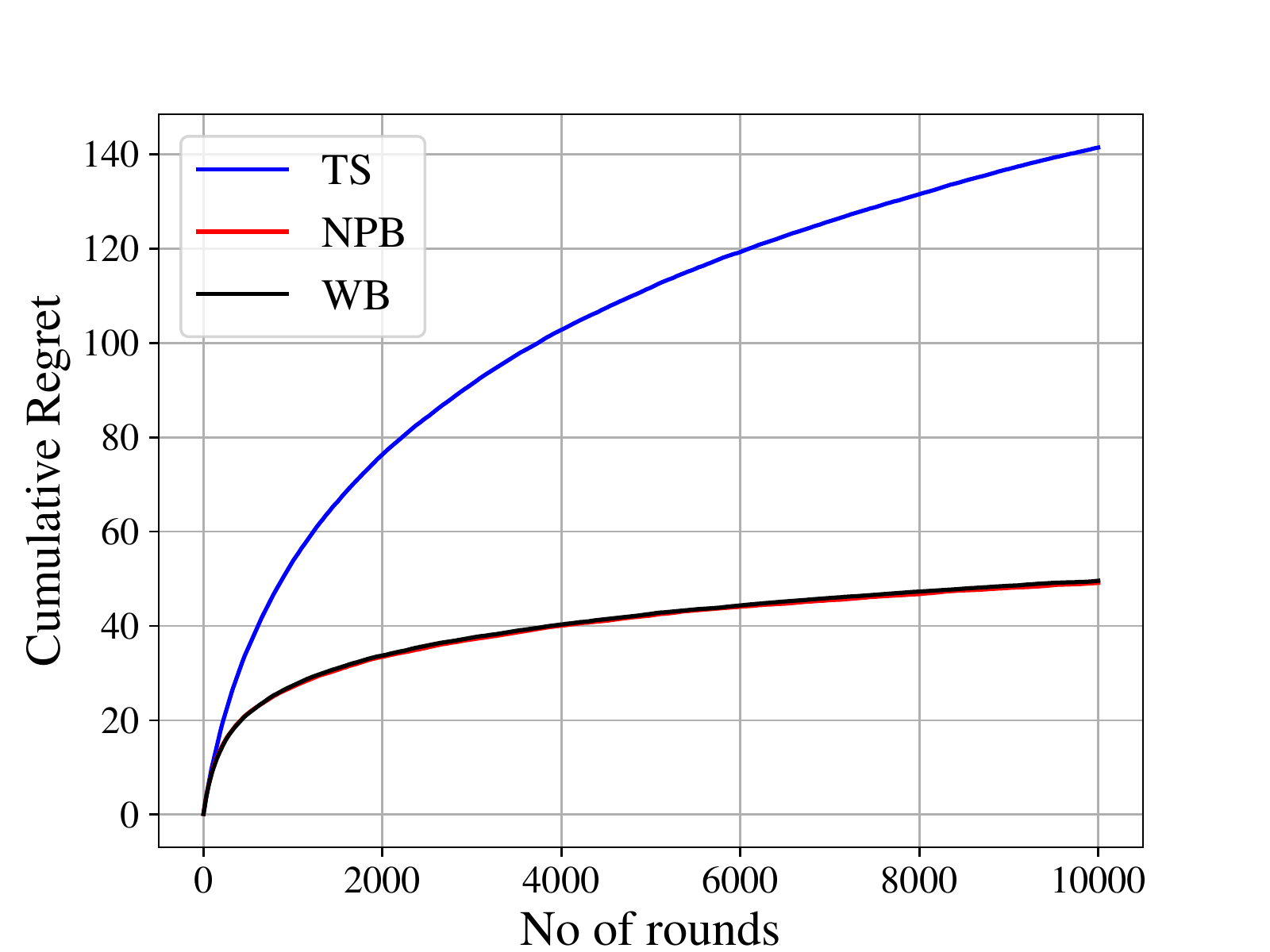}
			\label{fig:mab-triangular-10}
        }              	
        \caption{Cumulative Regret vs Number of rounds for TS, \NPB and \GWB in a bandit setting with $K = 10$ arms for (a) Bernoulli (b) Truncated-Normal (c) Beta (d) Triangular reward distributions bounded on the $[0,1]$ interval. \GWB results in the best performance in each these experiments.}
\end{figure*}    
We consider $K = 10$ arms (refer to Appendix~\ref{app:additional-results} for results with other values of $K$), a horizon of $T = 10^{4}$ rounds and average our results across $10^{3}$ runs. We perform experiments for four different reward distributions - Bernoulli, Truncated Normal, Beta and the Triangular distribution, all bounded on the $[0,1]$ interval. In each run and for each arm $j$, we choose the expected reward $\mu_j$ (mean of the corresponding distribution) to be a uniformly distributed random number in $[0,1]$. For the Truncated-Normal distribution, we choose the standard deviation to be equal to $10^{-4}$, whereas for the Beta distribution, the shape parameters of arm $j$ are chosen to be $\alpha = \mu_{j}$ and $\beta = 1 - \mu_{j}$. We use the $Beta(1,1)$ prior for TS. In order to use TS on distributions other than Bernoulli, we follow the procedure proposed in~\cite{agrawal2013further}: for a reward in $[0,1]$ we flip a coin with the probability of obtaining $1$ equal to the reward, resulting in a binary ``pseudo-reward''. This pseudo-reward is then used to update the Beta posterior as in the Bernoulli case. For \NPB and \GWB, we use the estimators in equations~\ref{eq:npb-ber} and~\ref{eq:WB-ber} respectively. For both of these, we use the pseudo-counts $\alpha_0 = \beta_0 = 1$.

In the Bernoulli case, \NPB obtains a higher regret as compared to both TS and \GWB which are equivalent. For the other distributions, we observe that both \GWB and \NPB (with \GWB resulting in consistently better performance) obtain lower cumulative regret than the modified TS procedure. This shows that for distributions that do not admit a conjugate prior, \GWB (and \NPB) can be directly used and results in good empirical performance as compared to making modifications to the TS procedure. 

\subsection{Contextual bandit setting}
\label{sec:cb-exps}
\begin{figure*}[!ht]
\centering
        \subfigure[CovType]
        {
			\includegraphics[width=0.48\textwidth]{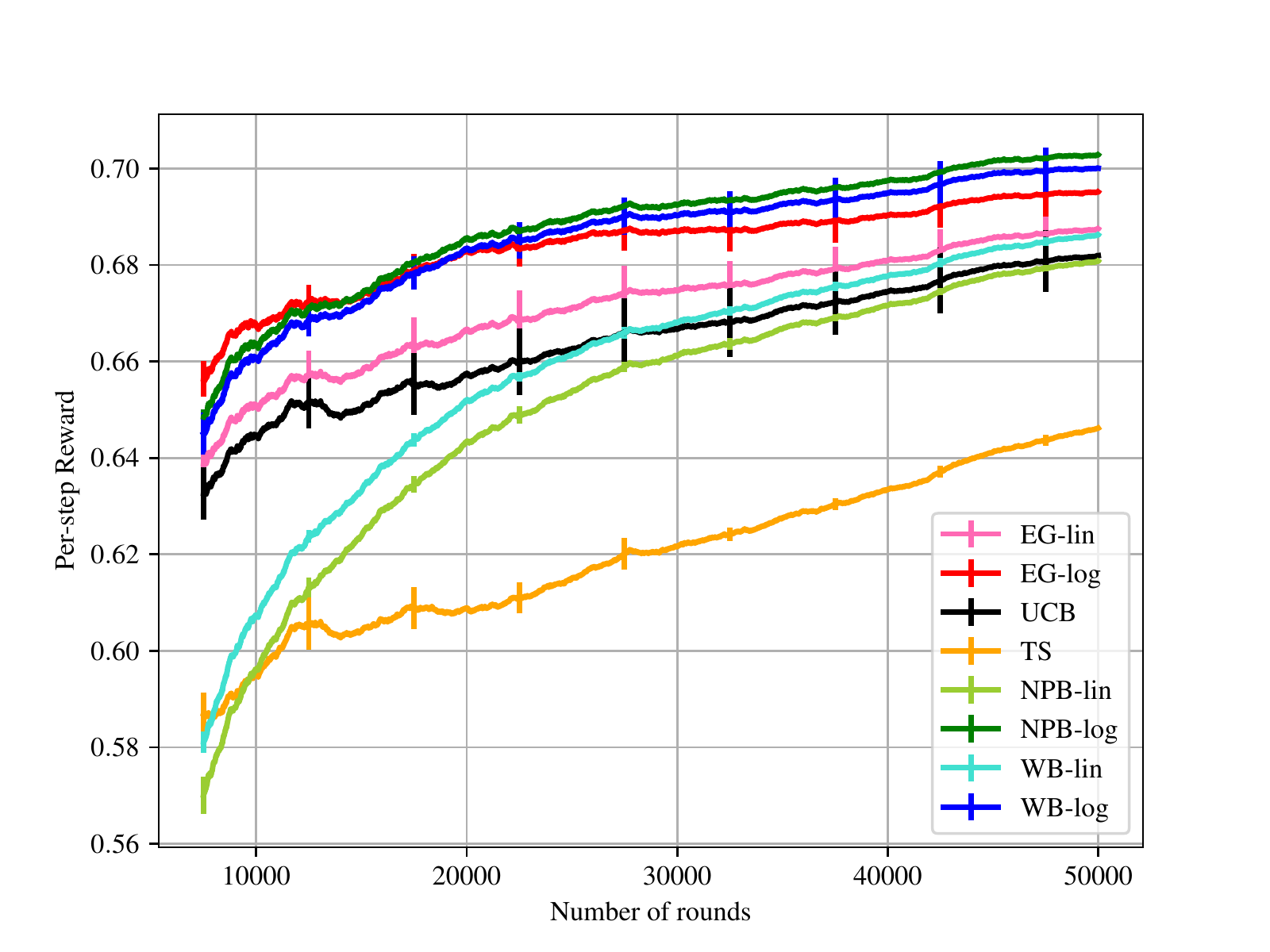}
			\label{fig:cb-covtype}
        }        
        \subfigure[MNIST]
        {
			\includegraphics[width=0.48\textwidth]{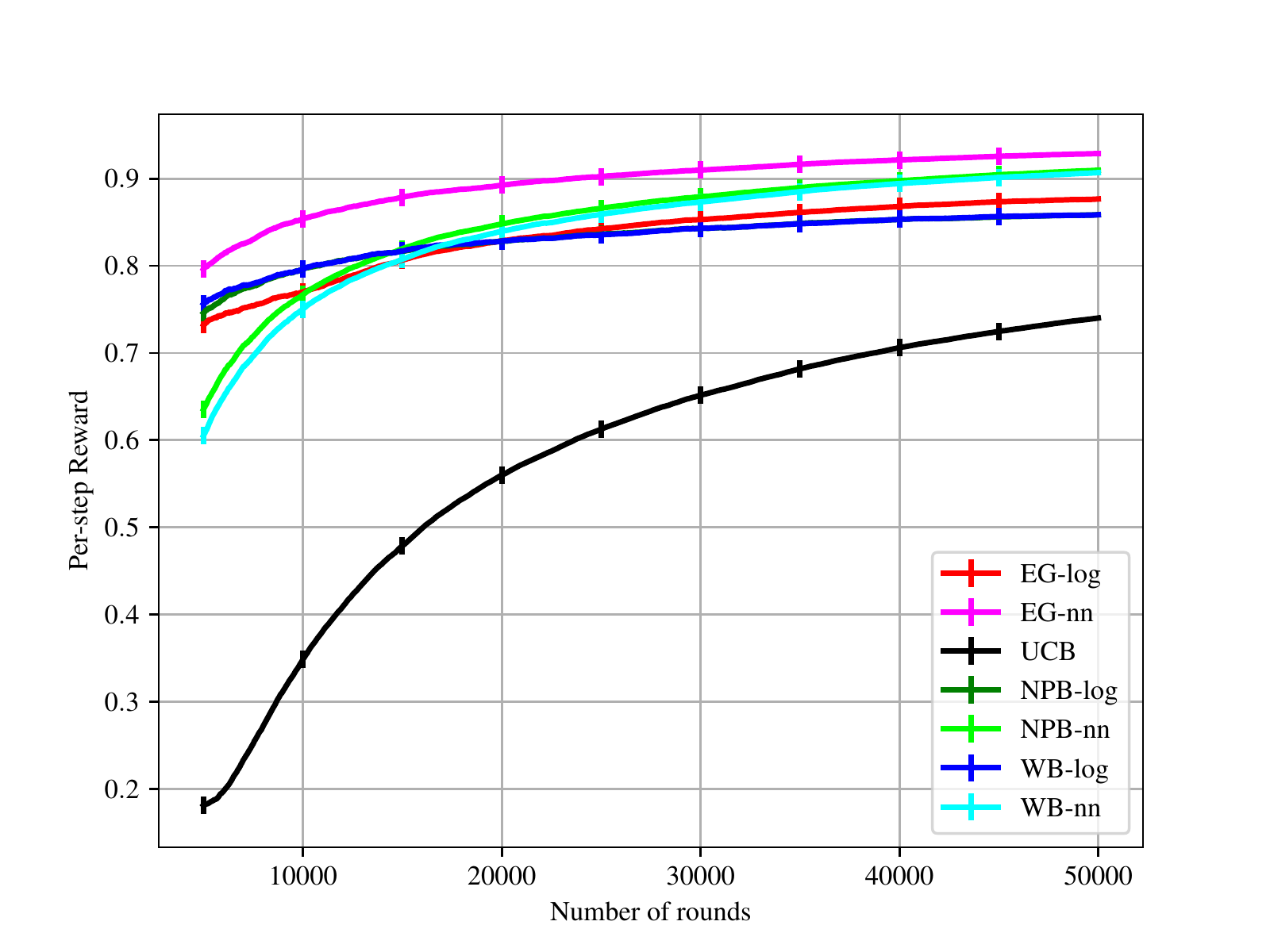}
			\label{fig:cb-mnist}
        }        
\caption{Expected per-step reward vs Number of Rounds for (a) CovType (b) MNIST datasets. Using non-linear models leads to higher per-step reward compared to linear bandit approaches. \NPB and \GWB with non-linear models perform well and match the performance of a tuned EG approach.}
\end{figure*}

We adopt the one-versus-all multi-class classification setting for evaluating contextual bandits \cite{agarwal2014taming,elmachtoub2017practical}. Each arm corresponds to a class. In each round, the algorithm receives a reward of one if the context vector belongs to the class corresponding to the selected arm and zero otherwise. Each arm maintains an independent set of sufficient statistics that map the context vector to the observed binary reward. We use two multi-class datasets: CoverType ($K = 7$ and $d = 54$) and MNIST ($K = 10$ and $d = 784$). The number of rounds in experiments is $T = 50000$ and we average results over $5$ independent runs. We experiment with LinUCB~\cite{abbasi2011improved}, which we call UCB, linear Thompson sampling (TS)~\cite{agrawal2013thompson}, $\epsilon$-greedy (EG)~\cite{langford2008epoch}, non-parametric bootstrapping (\NPB), and weighted bootstrapping (\GWB). For EG, \NPB and \GWB, we consider three model classes: linear regression (suffix ``-lin'' in plots), logistic regression (suffix ``-log'' in plots), and a single hidden-layer (with $100$ hidden neurons) fully-connected neural network (suffix ``-nn'' in plots). Since we compare various bandit algorithms and model classes, we use the expected per-step reward, $(\sum_{t = 1}^{T} \E[r_t] )/T$, as our performance metric. 

For EG, we experimented extensively with many different exploration schedules. We found that $\epsilon_t = 50 / (50 + t)$ leads to the best performance on both of our datasets. In practice, it is not possible to do such tuning on a new problem. Therefore, the EG results in this paper should be viewed as a proxy for the ``best'' attainable performance. As alluded to in Section~\ref{sec:introduction}, we implement bootstrapping using stochastic optimization with warm-start, in contrast to approximating it as in~\cite{elmachtoub2017practical, tang2015personalized}. Specifically, we use stochastic gradient descent to compute the MLE for the bootstrapped log-likelihood and warm-start the optimization at round $t$ by the solution from the previous round $t-1$. For linear and logistic regression, we optimize until we reach an error threshold of $10^{-3}$. For the neural network, we take $1$ pass over the dataset in each round. To ensure that the results do not depend on our specific choice of optimization, we use scikit-learn~\cite{sklearn_api} with stochastic optimization, and default optimization options for both linear and logistic regression. For the neural network, we use the Keras library~\cite{chollet2015} with the ReLU non-linearity for the hidden layer and sigmoid in the output layer, along with SGD and its default configuration. Preliminary experiments suggested that our procedure leads to better runtime as compared to~\cite{elmachtoub2017practical} and better performance than the approximation proposed in~\cite{tang2015personalized}, while also alleviating the need to tune any hyper-parameters. 

In the prior work on bootstrapping \cite{elmachtoub2017practical,tang2015personalized} for contextual bandits, the algorithm was initialized through forced exploration, where each arm is explored $m$ times at the beginning; or equivalently assigned $m$ pseudo-examples that are randomly sampled context vectors. Such a procedure introduces yet another tunable parameter $m$. Therefore, we propose the following parameter-free procedure. Let $v_1, \dots, v_d$ be the eigenvectors of the covariance matrix of the context vectors, and $\lambda_1^2, \dots, \lambda_d^2$ be the corresponding eigenvalues. For each arm, we add $4 d$ pseudo-examples: for all $i \in [d]$, we include the vectors $\lambda_i v_i$ and $- \lambda_i v_i$ each with both $0$ and $1$ labels. Since $\lambda_i$ is the standard deviation of features in the direction of $v_i$, this procedure ensures that we maintain enough variance in the directions where the contexts lie. In the absence of any prior information about the contexts, we recommend using $O(d)$ samples from an isotropic multivariate Gaussian and validate that it led to comparable performance on the two datasets. 

We plot the expected per-step reward of all compared methods on the CoverType and MNIST datasets in figures~\ref{fig:cb-covtype} and~\ref{fig:cb-mnist}, respectively. In figure~\ref{fig:cb-covtype}, we observe that EG, \NPB, and \GWB with logistic regression have the best performance in all rounds. The linear methods (EG, UCB, and bootstrapping) perform similarly, and slightly worse than logistic regression whereas TS has the worst performance. Neural networks perform similarly to logistic regression and we do not plot them here. This experiment shows that even for a relatively simple dataset, like CovType, a more expressive non-linear model can lead to better performance. This effect is more pronounced in figure~\ref{fig:cb-mnist}. For this dataset, we only show the best performing linear method, UCB. The performance of other linear methods, including those with bootstrapping, is comparable to or worse than UCB. We observe that non-linear models yield a much higher per-step reward, with the neural network performing the best. For both logistic regression and neural networks, the performance of both bootstrapping methods is similar and only slightly worse, respectively, than that of a tuned EG method. Both \NPB and \GWB are computationally efficient; on the CovType dataset, \NPB and \GWB with logistic regression take on average, $0.15$ and $0.16$ seconds per round, respectively. On the MNIST dataset, \NPB and \GWB have an average runtime of $1.16$ and $1.88$ seconds per round, respectively, when using logistic regression; and $1.16$ and $1.69$ seconds per round, respectively, when using a neural network.

% ---------------------------------------- DUMP -------------------------------------------------------------- %
%Furthermore, we emphasize that our implementation does not require tuning any hyper-parameters, which we believe will be a welcome relief to the practitioners of contextual bandits.  

%In addition, we compare against the recently proposed ensemble sampling method of~\cite{lu2017ensemble} (denoted as ES in the plots). For this, we set the number of models in the ensemble to $5$. \footnote{We observed that increasing the models further did not lead to large gains in the performance to justify the increase in runtime.}, perturb the inputs with standard normal noise (to diversify the models in the ensemble) and use a squared loss. 

%We make this choice for all but the method in~\cite{tang2015personalized} where the authors recommend SGD with a specific decreasing step-size. 

%In the future, we plan to investigate the effect of these optimization choices on the performance of the different algorithms. 

%and the algorithm proposed in~\cite{tang2015personalized} (denoted as Poi in the plots since it relies on the Poisson approximation of the binomial distribution). For both of these methods,
\section{Discussion}
\label{sec:discussion}
We showed that the commonly used non-parametric bootstrapping procedure can be provably inefficient. As an alternative, we proposed the weighted bootstrapping procedure, special cases of which become equivalent to TS for common reward distributions such as Bernoulli and Gaussian. On the empirical side, we showed that the \GWB procedure has better performance than a modified TS scheme for several bounded distributions in the bandit setting. In the contextual bandit setting, we provided guidelines to make bootstrapping simple and efficient to implement and showed that non-linear versions of bootstrapping have good empirical performance. Our work raises several open questions: does bootstrapping result in near-optimal regret for generalized linear models? Under what assumptions or modifications can \NPB be shown to have good performance? On the empirical side, evaluating bootstrapping across multiple datasets and comparing it against TS with approximate sampling is an important future direction.

\newpage
%\bibliographystyle{abbrvnat}
%\bibliography{ref}

\appendix
\section{Proof for Theorem~\ref{thm:lb}}
\label{sec:lb-proof}
We prove Theorem~\ref{thm:lb} in this section. First, we have the following tail bound for Binomial random variables:
\begin{proposition}[Binomial Tail Bound]
\label{proposition:tail_bound}
Assume that random variable $X \sim \mathrm{Bino}(n,p)$, then for any $k$ s.t. $np <k <n$, we have
\[
\cP(X \geq k) \leq \exp \left( -n D \left( \frac{k}{n} \middle \| p \right )\right ),
\]
where $D \left( \frac{k}{n} \middle \| p \right )$ is the KL-divergence between $\frac{k}{n} $ and $p$.
\end{proposition}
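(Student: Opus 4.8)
The plan is to prove this as a standard Chernoff (exponential-moment) tail bound, optimized over the tilting parameter. Write $X = \sum_{i=1}^{n} X_i$ where the $X_i$ are i.i.d.\ $\mathrm{Ber}(p)$. For any $\lambda > 0$, Markov's inequality applied to $e^{\lambda X}$ gives
\[
\cP(X \geq k) = \cP\!\left(e^{\lambda X} \geq e^{\lambda k}\right) \leq e^{-\lambda k}\, \E\!\left[e^{\lambda X}\right].
\]
By independence of the summands, $\E[e^{\lambda X}] = \prod_{i} \E[e^{\lambda X_i}] = \left(1 - p + p e^{\lambda}\right)^{n}$, so that
\[
\cP(X \geq k) \leq \exp\!\left(-\lambda k + n \log\!\left(1 - p + p e^{\lambda}\right)\right).
\]

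The next step is to minimize the exponent $f(\lambda) = -\lambda k + n \log(1 - p + p e^{\lambda})$ over $\lambda > 0$. Setting $f'(\lambda) = 0$ yields the first-order condition $\frac{n p e^{\lambda}}{1 - p + p e^{\lambda}} = k$; writing $a = k/n$ and solving gives $e^{\lambda^{\star}} = \frac{a(1-p)}{p(1-a)}$. The hypothesis $np < k < n$, i.e.\ $p < a < 1$, is exactly what guarantees $e^{\lambda^{\star}} > 1$, so the minimizer $\lambda^{\star} > 0$ lies in the admissible range; since $f$ is convex, this is its global minimum. Substituting back, one checks that $1 - p + p e^{\lambda^{\star}} = \frac{1-p}{1-a}$, whence
\[
f(\lambda^{\star}) = -n\!\left[a\log\tfrac{a}{p} + (1-a)\log\tfrac{1-a}{1-p}\right] = -n\,D\!\left(\tfrac{k}{n}\,\middle\|\,p\right),
\]
which is precisely the claimed exponent.

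The only genuinely routine but error-prone part is this last algebraic simplification identifying $f(\lambda^{\star})$ with the binary relative entropy; the single condition that must be tracked throughout is $k/n > p$, which is needed both for the admissibility of a positive tilt $\lambda^{\star}$ and for $D(k/n \,\|\, p)$ to be the correct positive exponent (the upper constraint $k < n$ merely keeps the entropy finite). I do not expect a substantive obstacle here, as the statement is a textbook large-deviations upper bound; an alternative would be to bound the tail sum $\sum_{j \geq k} \binom{n}{j} p^{j}(1-p)^{n-j}$ directly by the method of types, but the exponential-moment route avoids all combinatorial estimates and is the cleanest.
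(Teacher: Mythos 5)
Your proof is correct and complete: the Chernoff/exponential-tilting argument, the admissibility of the positive tilt under $p < k/n < 1$, and the algebraic identification of the optimized exponent with $-n\,D\!\left(\tfrac{k}{n}\,\middle\|\,p\right)$ all check out. The paper itself supplies no proof of this proposition, deferring instead to the cited tutorial of Arratia and Gordon, which establishes exactly this bound by the same standard large-deviations argument; so your write-up simply fills in what the paper outsources, taking essentially the canonical route.
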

Please refer to \cite{Arratia1989} for the proof of Proposition~\ref{proposition:tail_bound}.

Notice that for our considered case, the ``observation history" of the agent at the beginning of time $t$ is completely characterized by a 
triple $\cH_t=(\alpha_{t-1}, \beta_{t-1}, t)$, where $\alpha_{t-1}$ is the number of times arm $1$ has been pulled from time $1$ to $t-1$ and the realized reward is $1$, plus the pseudo count $\alpha_0$; similarly, $\beta_{t-1}$ is the number of times arm $1$ has been pulled from time $1$ to $t-1$ and the realized reward is $0$, plus the pseudo count $\beta_0$.
Moreover, conditioning on this history $\cH_t$, the probability that the agent will pull arm $1$ under the \NPB only depends on $(\alpha_{t-1}, \beta_{t-1})$. To simplify the exposition, we use $P(\alpha_{t-1}, \beta_{t-1})$
to denote this conditional probability. The following lemma bounds this probability in a ``bad" history: 
\begin{lemma}
\label{lemma:action_prob}
Consider a ``bad" history $\cH_{t}$ with $\alpha_{t-1}=1$ and $\beta_{t-1}=1+m$ for some integer $m \geq 15$, then we have
\[
P(\alpha_{t-1}, \beta_{t-1}) < \exp \left( -(m+2) \log(m+2) / 20 \right) < \exp \left( -m \log(m) / 20 \right).
\]
\end{lemma}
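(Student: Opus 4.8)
The plan is to reduce the event ``\NPB pulls arm $1$'' to a binomial tail event and then invoke Proposition~\ref{proposition:tail_bound}. First I would unpack the bad history: with pseudo-counts $(\alpha_0,\beta_0)=(1,1)$, the conditions $\alpha_{t-1}=1$ and $\beta_{t-1}=1+m$ mean that arm $1$ has been pulled exactly $m$ times, all returning reward $0$. Plugging $n=m$ and $\alpha=0$ into equation~\ref{eq:npb-ber}, the bootstrap sample satisfies $\tth_1=Z/(m+2)$ with $Z\sim\mathrm{Bino}\left(m+2,\tfrac{1}{m+2}\right)$. Since arm $1$ is chosen precisely when $\tth_1\geq 1/4$, we have $P(\alpha_{t-1},\beta_{t-1})=\cP\left(Z\geq\lceil (m+2)/4\rceil\right)$.

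Next I would apply the binomial tail bound with $N=m+2$, $p=1/N$, and $k=\lceil N/4\rceil$. The hypothesis $Np<k<N$ holds because $Np=1$ while $k\geq N/4\geq 17/4>1$ for $m\geq 15$, and $k<N$ trivially. This yields $P\leq\exp\left(-N\,D(k/N\,\|\,1/N)\right)$, so it remains to show $N\,D(k/N\,\|\,1/N)>(m+2)\log(m+2)/20=N\log N/20$, equivalently $D(k/N\,\|\,1/N)>\log N/20$.

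Because the Bernoulli KL divergence $D(\cdot\,\|\,p)$ is increasing on $(p,1)$ and $k/N\geq 1/4>1/N$, I would lower-bound $D(k/N\,\|\,1/N)\geq D(1/4\,\|\,1/N)$. Expanding gives $D(1/4\,\|\,1/N)=\tfrac14\log N-\tfrac14\log 4+\tfrac34\log\frac{3N}{4(N-1)}$, whose leading term $\tfrac14\log N$ dominates the bounded correction terms. The main obstacle --- and the reason for the seemingly arbitrary constant $20$ --- is verifying that the residual is controlled uniformly for all $m\geq 15$: the correction $-\tfrac14\log 4+\tfrac34\log\frac{3N}{4(N-1)}$ is a bounded negative quantity, so one checks that $\tfrac14\log N$ minus this residual still exceeds $\tfrac1{20}\log N$ at the boundary case $N=17$ (where it holds numerically) and, since the gap only widens as $N$ grows, for all larger $N$ as well. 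Finally, the second inequality in the lemma, $\exp\left(-(m+2)\log(m+2)/20\right)<\exp\left(-m\log m/20\right)$, is immediate from the monotonicity of $x\mapsto x\log x$.
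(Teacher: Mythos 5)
Your proposal is correct and follows essentially the same route as the paper's own proof: reduce the pull event to a tail event for $\mathrm{Bino}(m+2,\tfrac{1}{m+2})$, apply Proposition~\ref{proposition:tail_bound}, expand $D(1/4\,\|\,1/(m+2))$, and verify the residual beyond $\tfrac{1}{20}\log(m+2)$ is nonnegative at the boundary $m=15$ (the paper does this by splitting $\tfrac14=\tfrac1{20}+\tfrac15$ and bounding $\tfrac34\log\tfrac{3(m+2)}{4(m+1)}\geq\tfrac34\log\tfrac34$, which is the same check you perform at $N=17$). The only cosmetic difference is that you round the threshold to $\lceil (m+2)/4\rceil$ and invoke monotonicity of the KL divergence, whereas the paper applies the tail bound directly at the real-valued threshold $(m+2)/4$; both are valid.
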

\begin{proof}
Recall that by definition, we have
\begin{align}
P(\alpha_{t-1}, \beta_{t-1}) =& \,  P(1, m+1) \nonumber \\
\stackrel{(a)}{=} & \, \cP \left( w_t \geq 1/4 \, \middle | \, \alpha_{t-1}=1, \beta_{t-1} =m+1 \right) \nonumber \\
=& \, \cP \left( (m+2) w_t \geq (m+2)/4 \, \middle | \, \alpha_{t-1}=1, \beta_{t-1} =m+1 \right) \nonumber \\
\stackrel{(b)}{\leq} & \, \exp \left( -(m+2) D \left(\frac{1}{4} \middle \| \frac{1}{m+2} \right) \right),
\end{align}
where (a) follows from the \NPB procedure in this case, and (b) follows from Proposition~\ref{proposition:tail_bound}.
Specifically, recall that $(m+2) w_t \sim \mathrm{Bino}(m+2, 1/(m+2))$, and
$
(m+2)/4 > (m+2) \frac{1}{m+2}=1
$
for $m \geq 15$. Thus, the conditions of Proposition~\ref{proposition:tail_bound} hold in this case. Furthermore, we have
\begin{align}
 D \left(\frac{1}{4} \middle \| \frac{1}{m+2} \right) =& \, \frac{1}{4} \log \left( \frac{m+2}{4}\right) + \frac{3}{4} \log \left( \frac{3(m+2)}{4(m+1)} \right) \nonumber \\
 \geq & \, \frac{1}{4} \log(m+2) - \frac{1}{4} \log(4) +  \frac{3}{4} \log \left( \frac{3}{4} \right) \nonumber \\
 = & \, \frac{1}{20} \log(m+2) + \left [
 \frac{1}{5} \log(m+2) - \frac{1}{4} \log(4) +  \frac{3}{4} \log \left( \frac{3}{4} \right) 
 \right] \nonumber \\
 \stackrel{(c)}{>} & \, \frac{1}{20} \log(m+2) ,
\end{align}
where (c) follows from the fact that $ \frac{1}{5} \log(m+2) - \frac{1}{4} \log(4) +  \frac{3}{4} \log \left( \frac{3}{4} \right) \geq 0$ for $m > 15$. Thus we have
\begin{align}
P(\alpha_{t-1}, \beta_{t-1}) \leq & \, \exp \left( -(m+2) D \left(\frac{1}{4} \middle \| \frac{1}{m+2} \right) \right) \nonumber \\
< & \, \exp \left( -(m+2) \log(m+2)/20 \right) \nonumber \\
< & \, \exp \left( -m \log(m)/20 \right).
\end{align}
\end{proof}

The following technical lemma derives the expected value of a truncated geometric random variable, 
as well as a lower bound on it, which will be used in the subsequent analysis:
\begin{lemma}[Expected Value of Truncated Geometric R.V.]
\label{lemma:truncated_geo}
Assume that $Z$ is a truncated geometric r.v. with parameter $p \in (0, 1)$ and integer $l \geq 1$. Specifically, the domain of 
$Z$ is $\{ 0,1, \ldots, l \}$, and $\cP(Z=i)=(1-p)^i p$ for $i=0, 1, \ldots, l-1$ and $\cP(Z=l)=(1-p)^l $. Then we have
\[
\E(Z)= \left[\frac{1}{p}-1 \right] \left[1 -(1-p)^l \right] \geq \frac{1}{2} \min \left \{ \frac{1}{p}-1 , \, l (1-p) \right \} .
\]
\end{lemma}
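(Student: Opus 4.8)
The plan is to first compute $\E(Z)$ exactly via the tail-sum formula, and then establish the lower bound by a short case analysis. Throughout I write $q = 1-p$.

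For the exact value, I would use the identity $\E(Z) = \sum_{k=1}^{l} \cP(Z \geq k)$, valid for any random variable supported on $\{0,1,\dots,l\}$. The key observation is that for the truncated geometric the tail has an especially clean form: for every $k \in \{1,\dots,l\}$,
\[
\cP(Z \geq k) = 1 - \sum_{i=0}^{k-1} q^i p = 1 - p\,\frac{1-q^k}{1-q} = q^k,
\]
since the probabilities $\cP(Z=i)=q^i p$ for $i<l$ telescope exactly as in the untruncated geometric, and the truncation mass $\cP(Z=l)=q^l$ is precisely what makes the distribution sum to one (so it does not alter these partial tails). Summing the finite geometric series $\sum_{k=1}^l q^k = q(1-q^l)/(1-q)$ then gives $\E(Z) = \frac{q}{p}(1-q^l) = \left(\frac1p - 1\right)\left(1 - (1-p)^l\right)$, which is the claimed equality.

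For the lower bound, I would factor the target. Since $\frac1p - 1 = \frac qp$ and $l(1-p) = lq$, dividing through by the common positive factor $q$ reduces the desired inequality to
\[
\frac{1}{p}\bigl(1 - q^l\bigr) \ge \tfrac12 \min\left\{\tfrac1p,\, l\right\}.
\]
I would then split on whether $lp \le 1$ or $lp > 1$, using the standard bound $(1-p)^l \le e^{-lp}$ in both. When $lp > 1$ the minimum equals $1/p$ and the claim becomes $1 - q^l \ge \tfrac12$, which follows from $q^l \le e^{-lp} < e^{-1} < \tfrac12$. When $lp \le 1$ the minimum equals $l$ and the claim becomes $1 - q^l \ge \tfrac{lp}{2}$; here I would use $1 - q^l \ge 1 - e^{-lp}$ together with the elementary inequality $1 - e^{-x} \ge x/2$ for $x \in [0,1]$, applied at $x = lp$.

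The only step needing genuine (though still routine) care is the elementary inequality $1 - e^{-x} \ge x/2$ on $[0,1]$, which I expect to be the main obstacle in the sense that it is the one place the argument is not pure algebra. I would justify it by noting that $f(x) = 1 - e^{-x} - x/2$ satisfies $f(0) = 0$ and $f(1) = \tfrac12 - e^{-1} > 0$, while $f'(x) = e^{-x} - \tfrac12$ changes sign once (from positive to negative) on $[0,1]$; hence $f$ first increases then decreases and so stays nonnegative throughout the interval. This validates the $lp \le 1$ case and completes the proof.
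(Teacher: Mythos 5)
Your proof is correct, but it reaches both parts of the lemma by a different route than the paper. For the exact value, you use the tail-sum identity $\E(Z)=\sum_{k=1}^{l}\cP(Z\geq k)$ together with the observation that the truncated tails are exactly $\cP(Z\geq k)=(1-p)^k$; the paper instead computes $\E(Z)$ from the definition and evaluates the arithmetic-geometric sum $A=\sum_{i=0}^{l-1} i(1-p)^i$ by the multiply-by-$(1-p)$-and-subtract trick. Your version is shorter and less error-prone. For the lower bound, both proofs split on whether $pl\lessgtr 1$, but the key inequality differs: the paper proves $(1-p)^l\leq 1-\frac{pl}{1+pl}$ by induction on $l$, which yields the uniform intermediate bound $\E(Z)\geq\left[\frac{1}{p}-1\right]\frac{pl}{1+pl}$ from which both cases follow by pure algebra (monotonicity of $\frac{pl}{1+pl}$ in $p$); you instead invoke the standard bound $(1-p)^l\leq e^{-lp}$ and, in the small-$pl$ case, the calculus fact $1-e^{-x}\geq x/2$ on $[0,1]$, with the constant $\frac{1}{2}$ arising separately in each case ($e^{-1}<\frac{1}{2}$ in one, the factor $x/2$ in the other). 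The paper's route buys a self-contained, calculus-free argument and a slightly stronger uniform intermediate inequality; yours buys brevity and reliance only on off-the-shelf bounds, at the cost of a one-variable monotonicity argument for $1-e^{-x}-x/2$, which you justify correctly. Both arguments are complete and establish the lemma as stated.
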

\begin{proof}
Notice that by definition, we have
\[
\E(Z)= p \underbrace{ \left[ \sum_{i=0}^{l-1} i (1-p)^i \right] }_{A} +l (1-p)^l  
\]
Define the shorthand notation $A=\sum_{i=0}^{l-1} i (1-p)^i $, we have
\begin{align}
(1-p)A=& \, \sum_{i=0}^{l-1} i (1-p)^{i+1} =\sum_{i=1}^{l} (i-1) (1-p)^{i} \nonumber \\
 =& \,  \sum_{i=0}^{l} i (1-p)^{i} -\left[\frac{1}{p}-1 \right] \left[1 -(1-p)^l \right] \nonumber \\
 =& \, A + l(1-p)^{l} -\left[\frac{1}{p}-1 \right] \left[1 -(1-p)^l \right].
\end{align}
Recall that $\E(Z)= pA + l (1-p)^l  $, we have proved that 
$\E(Z)= \left[\frac{1}{p}-1 \right] \left[1 -(1-p)^l \right] $.

Now we prove the lower bound. First, we prove that 
\begin{equation}
\label{eqn:truncated_grv}
(1-p)^l \leq 1- \frac{pl}{1+pl}
\end{equation}
always holds by induction on $l$. Notice that when $l=1$, the LHS of equation~(\ref{eqn:truncated_grv}) is $1-p$, and the RHS of equation~(\ref{eqn:truncated_grv})
is $\frac{1}{1+p}$. Hence, this inequality trivially holds in the base case. Now assume that equation~(\ref{eqn:truncated_grv}) holds for $l$, we prove that it also holds for $l+1$. Notice that
\begin{align}
(1-p)^{l+1} =& \, (1-p)^{l} (1-p) \stackrel{(a)}{\leq} \left( 1- \frac{pl}{1+pl} \right) (1-p)  \nonumber \\
=& \, 1 - \frac{p(l+1)}{1+pl} +\frac{p}{1+pl}-p +\frac{p^2 l}{1+pl} \nonumber \\
=& \, 1 - \frac{p(l+1)}{1+pl} \leq 1 - \frac{p(l+1)}{1+p(l+1)} \, ,
\end{align}
where (a) follows from the induction hypothesis. Thus equation~(\ref{eqn:truncated_grv}) holds for all $p$ and $l$.
Notice that equation~\ref{eqn:truncated_grv} implies that
\[
\E(Z)= \left[\frac{1}{p}-1 \right] \left[1 -(1-p)^l \right] \geq \left[\frac{1}{p}-1 \right] \frac{pl}{1+pl} \, .
\]
We now prove the lower bound. Notice that for any $l$, $ \frac{pl}{1+pl}$ is an increasing function of $p$, thus for $p \geq 1/l$, we have
\[
 \left[\frac{1}{p}-1 \right] \frac{pl}{1+pl} \geq \left[\frac{1}{p}-1 \right]/2 \geq \frac{1}{2} \min \left \{ \frac{1}{p}-1 , \, l (1-p) \right \}.
\]
On the other hand, if $p \leq 1/l$, we have
\[
 \left[\frac{1}{p}-1 \right] \frac{pl}{1+pl}  = \frac{(1-p)l}{1+pl} \geq (1-p)l/2  \geq \frac{1}{2} \min \left \{ \frac{1}{p}-1 , \, l (1-p) \right \}\, .
\]
Combining the above results, we have proved the lower bound on $\E (Z)$.
\end{proof}

We then prove the following lemma:
\begin{lemma}[Regret Bound Based on $m$]
\label{lemma:lb_m}
When \NPB is applied in the considered case, for any integer $m$ and time horizon $T$ satisfying
$15 \leq m \leq T$, we have
\[
\E \left[ R(T) \right] \geq \frac{2^{-m}}{8} \min \left \{
 \exp \left( \frac{m \log(m)}{20} \right), \frac{T}{4} 
\right \}\, .
% \frac{2^{-m} }{4} \min \left \{ \frac{T}{4},  \exp \left( \frac{m \log(m)}{20} \right)\right \} \, .
\]
\end{lemma}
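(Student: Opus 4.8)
The plan is to isolate a single \emph{bad history} that occurs with probability exactly $2^{-m}$ and to show that, once it is reached, the agent pulls the suboptimal arm $2$ a truncated-geometric number of times, each such pull costing $\mu_1 - 1/4 = 1/4$ in regret. First I would fix the reward tape of arm $1$: let $\xi_1, \xi_2, \dots$ be the i.i.d.\ $\mathrm{Ber}(1/2)$ rewards returned on its successive pulls. Whether arm $1$ is pulled for the $i$-th time is a function of $\xi_1, \dots, \xi_{i-1}$, the deterministic arm-$2$ information, and the internal bootstrap randomness, but never of $\xi_i$; hence $B = \{\xi_1 = \dots = \xi_m = 0\}$ is an event on the reward tape alone and $\cP(B) = 2^{-m}$. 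On $B$, every one of arm $1$'s first $m$ pulls returns $0$, so whenever arm $1$ has been pulled $k \le m$ times its history equals $(\alpha_{t-1}, \beta_{t-1}) = (1, 1+k)$ and the probability of pulling arm $1$ at that round is $P(1, 1+k)$.

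Next I would define $\tau$ as the first round at which arm $1$ has been pulled exactly $m$ times (all returning $0$ on $B$), and split on the location of $\tau$ to control the horizon truncation. If $\tau$ never occurs within $T$ rounds, or occurs with $\tau > T/2$, then arm $1$ is pulled at most $m$ times during the first $\min\{\tau, T\} > T/2$ rounds, which forces at least $T/2 - m$ pulls of arm $2$ and hence regret of order $T$; this already exceeds the claimed bound. In the remaining regime $\tau \le T/2$, the history is frozen at $(1, 1+m)$ for as long as the agent keeps selecting arm $2$, and because a fresh bootstrap sample is drawn every round, the number $Z$ of arm-$2$ pulls strictly before arm $1$ is next pulled (or the horizon is reached) is precisely the truncated geometric random variable of \cref{lemma:truncated_geo} with success probability $q := P(1, 1+m)$ and truncation length $l = T - \tau \ge T/2$.

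To finish I would combine the three ingredients. \cref{lemma:action_prob} gives $q < \exp(-m\log(m)/20) \le 1/2$ for $m \ge 15$, so $1 - q \ge 1/2$ and $1/q > \exp(m\log(m)/20)$; plugging $l \ge T/2$ into the lower bound of \cref{lemma:truncated_geo} yields $\E[Z \mid B, \tau = t_0] \ge \tfrac12 \min\{1/q - 1,\, l(1-q)\}$, where $1/q - 1$ is controlled by $\exp(m\log(m)/20)$ and $l(1-q) \ge T/4$. Multiplying by the per-pull regret $1/4$ and by $\cP(B) = 2^{-m}$, and keeping whichever of the two regimes is weaker, delivers $\E[R(T)] \ge \tfrac{2^{-m}}{8}\min\{\exp(m\log(m)/20),\, T/4\}$ after routine bookkeeping of the constants using $m \ge 15$.

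The main obstacle is the timing/truncation issue rather than any single inequality: the truncated geometric can only accumulate arm-$2$ pulls if a non-trivial fraction of the horizon survives after the bad history is reached, which is exactly why I introduce $\tau$ and dispose of a late $\tau$ by noting that a late $\tau$ itself certifies many arm-$2$ pulls. Once the regime $\tau \le T/2$ is secured, identifying the frozen-history draws as an independent sequence with per-round success probability $q$ and invoking \cref{lemma:action_prob,lemma:truncated_geo} is mechanical.
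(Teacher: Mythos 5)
Your proposal follows the same route as the paper's proof: the same bad event of probability $2^{-m}$, the same stopping time $\tau$ with a split at $T/2$, and the same combination of \cref{lemma:action_prob} and \cref{lemma:truncated_geo} in the regime $\tau \le T/2$ (including absorbing the $-1$ in $1/q-1$ via the sharper $(m+2)\log(m+2)/20$ exponent). Your reward-tape formulation of the bad event $B$ is in fact slightly cleaner than the paper's: it makes $\cP(B)=2^{-m}$ immediate, whereas the paper has to argue that arm $1$ is pulled infinitely often so that its history-based event $\cG$ has this probability.

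There is, however, a genuine gap in your late-$\tau$ regime. You lower bound the regret there by counting only arm-$2$ pulls, of which you get at least $\lfloor T/2\rfloor - m$, for a regret of at least $(\lfloor T/2\rfloor - m)/4$. The lemma is stated for every integer $m$ with $15 \le m \le T$, and to close the argument you need this quantity to be at least $\tfrac{1}{8}\cdot\tfrac{T}{4}$; that holds only when roughly $m \le 3T/8$. For larger $m$ your bound degrades, and for $m \ge \lfloor T/2\rfloor$ it is nonpositive and hence vacuous --- note that in this range, conditioned on $B$, the late regime is the \emph{only} possible one, so nothing else rescues the claim. The fix is available inside your own setup and is exactly what the paper does: since the regret here is measured against realized rewards, on $B$ every arm-$1$ pull among the first $m$ returns reward $0$ and contributes $\mu_1 - 0 = 1/2$, so \emph{every} one of the first $\lfloor T/2\rfloor$ rounds (whether it pulls arm $1$ or arm $2$) contributes at least $1/4$, giving $\E\left[ R(T) \,\middle|\, B,\ \tau > T/2 \right] \ge \lfloor T/2\rfloor/4$ with no subtraction of $m$. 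With that one change your argument covers the full range $15 \le m \le T$ and coincides with the paper's proof.
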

\begin{proof}
We start by defining the bad event $\cG$ as
\[
\cG = \left \{ \exists t =1,2,\ldots \text{ s.t. } \alpha_{t-1}=1 \text{ and } \beta_{t-1}=m+1 \right \} \, .
\]
Thus, we have $\E \left[ R(T) \right] \geq \cP \left( \cG \right ) \E \left[ R(T) \middle | \, \cG \right]$. 
Since $\alpha_t \geq 1$ for all $t=1,2,\ldots$, with probability $1$, the agent will pull arm $1$ infinitely often. Moreover, the event $\cG$
only depends on the outcomes of the first $m$ pulls of arm $1$. Thus we have
$\cP \left( \cG \right ) = 2^{-m}$. Furthermore, conditioning on $\cG$, we define the stopping time $\tau$ as
\[
\tau = \min \left \{  t =1,2,\ldots \text{ s.t. } \alpha_{t-1}=1 \text{ and } \beta_{t-1}=m+1 \right \} \, .
\]
Then we have
\begin{align}
\E \left[ R(T) \right]  \geq & \, \cP \left( \cG \right ) \E \left[ R(T) \middle | \, \cG \right] = 2^{-m} \E \left[ R(T) \middle | \, \cG \right] \nonumber \\
= & \,  2^{-m} \left [
\cP \left (\tau > T/2 \middle | \cG \right) \E \left[ R(T) \middle | \, \cG,  \tau > T/2 \right]
+
\cP \left (\tau \leq T/2 \middle | \cG \right) \E \left[ R(T) \middle | \, \cG,  \tau \leq T/2 \right]
\right ] \nonumber \\
\geq & \,  2^{-m} 
\min \left \{
\E \left[ R(T) \middle | \, \cG,  \tau > T/2 \right], \, 
\E \left[ R(T) \middle | \, \cG,  \tau \leq T/2 \right]
\right \}
\end{align}
Notice that conditioning on event $\{ \cG,  \tau > T/2 \}$, in the first $\lfloor T/2 \rfloor$ steps, the agent either pulls arm $2$ or pulls arm $1$ but receives a reward
$0$, thus, by definition of $R(T)$, we have
\[
\E \left[ R(T) \middle | \, \cG,  \tau > T/2 \right] \geq \frac{ \lfloor T/2 \rfloor }{4}.
\]

On the other hand, if $\tau \leq T/2$, notice that for any time $t \geq \tau$ with history $\cH_{t}=(\alpha_{t-1}, \beta_{t-1}, t)$ s.t. 
$(\alpha_{t-1}, \beta_{t-1}) = (1, m+1)$, the agent will pull arm $1$
conditionally independently with probability $P(1, m+1)$. Thus, conditioning on $\cH_{\tau}$, the number of times the agent will pull arm $2$ before it pulls arm
$1$ again follows the truncated geometric distribution with parameter $P(1, m+1)$ and $T-\tau+1$. From Lemma~\ref{lemma:truncated_geo}, for any $\tau \leq T/2$, we have
\begin{align}
\E \left[ R(T) \middle | \, \cG,  \tau  \right] \stackrel{(a)}{\geq} & \, \frac{1}{8}  \min \left \{ \frac{1}{P(1, m+1)}-1 , \, (T-\tau+1)(1-P(1, m+1)) \right \}  \nonumber \\
\stackrel{(b)}{\geq}  & \,  \frac{1}{8}  \min \left \{ \frac{1}{P(1, m+1)}-1 , \, \frac{T}{2}(1-P(1, m+1)) \right \}  \nonumber \\
\stackrel{(c)}{>}  & \,  \frac{1}{8}  \min \left \{ \exp \left( (m+2) \log(m+2) / 20 \right)-1 , \, \frac{T}{4} \right \}  \nonumber \\
\stackrel{(d)}{\geq}  & \,  \frac{1}{8}  \min \left \{ \exp \left( m \log(m) / 20 \right) , \, \frac{T}{4} \right \} \, ,
\end{align}
notice that a factor of $1/4$ in inequality (a) is due to the reward gap. Inequality (b) follows from the fact that $\tau \leq T/2$; 
inequality (c) follows from Lemma~\ref{lemma:action_prob}, which states that for $m \geq 15$, we have
$ P(\alpha_{t-1}, \beta_{t-1}) < \exp \left( -(m+2) \log(m+2) / 20 \right) < \frac{1}{2}$;
inequality (d) follows from the fact that for $m \geq 15$, we have
\[
 \exp \left( (m+2) \log(m+2) / 20 \right)-1 > \exp \left( m \log(m) / 20 \right) \, .
\]
Finally, notice that
\[
\E \left[ R(T) \middle | \, \cG,  \tau \leq T/2 \right ] = \sum_{\tau \leq T/2} \cP(\tau |  \cG,  \tau \leq T/2 ) \E \left[ R(T) \middle | \, \cG,  \tau  \right] > 
\frac{1}{8}  \min \left \{ \exp \left( \frac{m \log(m)}{20} \right) , \, \frac{T}{4} \right \} \, .
\]
Thus, combining everything together, we have
\begin{align}
\E[R(T)]  \geq & \, 2^{-m} 
\min \left \{
\E \left[ R(T) \middle | \, \cG,  \tau > T/2 \right], \, 
\E \left[ R(T) \middle | \, \cG,  \tau \leq T/2 \right]
\right \} \nonumber \\
>& \, \frac{2^{-m}}{4} \min \left \{
\frac{1}{2} \exp \left( \frac{m \log(m)}{20} \right), \frac{T}{8} , \lfloor \frac{T}{2} \rfloor
\right \} \nonumber \\
= & \, \frac{2^{-m}}{4} \min \left \{
\frac{1}{2} \exp \left( \frac{m \log(m)}{20} \right), \frac{T}{8} 
\right \} \, ,
\end{align}
where the last equality follows from the fact that $ \frac{T}{8} < \lfloor \frac{T}{2} \rfloor$ for $T \geq 15$.
This concludes the proof.
\end{proof}

Finally, we prove Theorem~\ref{thm:lb}.
\begin{proof}
For any given $\gamma \in (0,1)$, we choose 
$
m= \left \lceil \frac{\gamma \log(T)}{2} \right \rceil
$. 
Since
\[ T \geq \exp \left[
\frac{2}{\gamma}  \exp \left(\frac{80}{\gamma} \right)
\right ]  \,
, \] 
we have
\[
T \gg m= \left \lceil \frac{\gamma \log(T)}{2} \right \rceil \geq \exp \left(\frac{80}{\gamma} \right) \geq \exp \left(80 \right) \gg 15,
\]
thus, Lemma~\ref{lemma:lb_m} is applicable. Notice that
\[
\E \left[ R(T) \right] \geq \frac{2^{-m}}{8} \min \left \{
 \exp \left( \frac{m \log(m)}{20} \right), \frac{T}{4} 
\right \} >  
\frac{\exp(-m)}{8} \min \left \{
 \exp \left( \frac{m \log(m)}{20} \right), \frac{T}{4} 
\right \} \, .
\]
Furthermore, we have
\[
\exp(-m) T > \exp \left(
-\gamma \log(T)
\right) T = T^{1-\gamma}\, ,
\]
where the first inequality follows from $m= \left \lceil \frac{\gamma \log(T)}{2} \right \rceil < \gamma \log(T)$. On the other hand, we have
\[
\exp(-m)  \exp \left( \frac{m \log(m)}{20} \right) = \exp \left (
 \frac{m \log(m)}{20} -m
\right) \geq 
 \exp \left (
 \frac{m \log(m)}{40} 
\right) \, ,
\]
where the last inequality follows from the fact that $ \frac{m \log(m)}{40}  \geq m$, since
$m \geq \exp(80)$. Notice that we have
\[
 \exp \left (
 \frac{m \log(m)}{40} 
\right) \geq 
 \exp \left (
 \frac{\gamma \log(T) \log(\frac{\gamma \log(T)}{2})}{80} 
\right) \geq T \, ,
\]
where the first inequality follows from the fact that $m \geq \frac{\gamma \log(T)}{2}$, and the second inequality follows from 
$T \geq \exp \left[
\frac{2}{\gamma}  \exp \left(\frac{80}{\gamma} \right)
\right ] $.
Putting it together, we have
\[
\E \left[ R(T) \right]  > 
\frac{1}{8} \min \left \{
T, \frac{T^{1-\gamma}}{4} 
\right \} = \frac{T^{1-\gamma}}{32} \, .
\]
This concludes the proof for Theorem~\ref{thm:lb}.
\end{proof}

%\subsection{Numerical simulations for verifying the lower bound}
%\label{app:lb-exps}
%\begin{figure*}[!ht]
%\centering
%        \subfigure[]
%        {
%			\includegraphics[width=0.45\textwidth]{./figs/LB/regret01.png}
%			\label{fig:lb-regret01}
%        }        
%        \subfigure[]
%        {
%			\includegraphics[width=0.45\textwidth]{./figs/LB/regret11.png}
%			\label{fig:lb-regret11}
%        }        
%        \subfigure[]
%        {
%			\includegraphics[width=0.45\textwidth]{./figs/LB/histogram.png}
%			\label{fig:lb-hist}
%        }        
%\end{figure*}

\newcommand{\etal}{\emph{et al.}}
\section{Proof for Theorem 2}
\label{app:forced-ub}

For simplicity of exposition, we consider $2$ arms with means $\mu_1 > \mu_2$. Let $\Delta = \mu_1 - \mu_2$. Let $\bar{\mu}_t(k)$ be the mean of the history of arm $k$ at time $t$ and $\hat{\mu}_t(k)$ be the mean of the bootstrap sample of arm $k$ at time $t$. Note that both are random variables. Each arm is initially explored $m$ times. Since $\bar{\mu}_t(k)$ and $\hat{\mu}_t(k)$ are estimated from random samples of size at least $m$, we get from Hoeffding's inequality (Theorem 2.8 in Boucheron \etal~\cite{boucheron2013concentration}) that
\begin{align*}
  P(\bar{\mu}_t(1) \leq \mu_1 - \epsilon) & \leq \exp[-2 \epsilon^2 m]\,, \\
  P(\bar{\mu}_t(2) \geq \mu_2 + \epsilon) & \leq \exp[-2 \epsilon^2 m]\,, \\
  P(\hat{\mu}_t(1) \leq \bar{\mu}_t(1) - \epsilon) & \leq \exp[-2 \epsilon^2 m]\,, \\
  P(\hat{\mu}_t(2) \geq \bar{\mu}_t(2) + \epsilon) & \leq \exp[-2 \epsilon^2 m]
\end{align*} 
for any $\epsilon > 0$ and time $t > 2 m$. The first two inequalities hold for any $\mu_1$ and $\mu_2$. The last two hold for any $\bar{\mu}_t(1)$ and $\bar{\mu}_t(2)$, and therefore also in expectation over their random realizations. Let $\mathcal{E}$ be the event that the above inequalities hold jointly at all times $t > 2 m$ and $\bar{\mathcal{E}}$ be the complement of event $\mathcal{E}$. Then by the union bound,
\begin{align*}
  P(\bar{\mathcal{E}}) \leq 4 T \exp[-2 \epsilon^2 m]\,.
\end{align*} 
By the design of the algorithm, the expected $T$-step regret is bounded from above as
\begin{align*}
  \E[R(T)]
  & = \Delta m + \Delta \sum_{t = 2 m + 1}^T \E[\I\{j_t = 2\}] \\
  & \leq \Delta m + \Delta \sum_{t = 2 m + 1}^T \E[\I\{j_t = 2, \ \mathcal{E}\}] + 4 T^2 \exp[-2 \epsilon^2 m]\,,
\end{align*} 
where the last inequality follows from the definition of event $\mathcal{E}$ and observation that the maximum $T$-step regret is $T$. Let
\begin{align*}
  \quad m = \left\lceil\frac{16}{\tilde{\Delta}^2} \log T\right\rceil\,, \quad \epsilon = \frac{\tilde{\Delta}}{4}\,,
\end{align*} 
where $\tilde{\Delta}$ is a tunable parameter that determines the number of exploration steps per arm. From the definition of $m$ and $\tilde{\Delta}$, and the fact that $\E[\I\{j_t = 2, \ \mathcal{E}\}] = 0$ when $\tilde{\Delta} \leq \Delta$, we have that
\begin{align*}
  \E[R(T)] \leq
  \frac{16 \Delta}{\tilde{\Delta}^2} \log T + \tilde{\Delta} T + \Delta + 4\,.
\end{align*} 
Finally, note that $\Delta \leq 1$ and we choose $\displaystyle \tilde{\Delta} = \left(\frac{16 \log T}{T}\right)^\frac{1}{3}$ that optimizes the upper bound.

\section{Weighted bootstrapping and equivalence to TS}
\label{app:ts-equivalence}
In this section, we prove that for the common reward distributions, \GWB becomes equivalent to TS for specific choices of the weight distribution and the transformation function. 
% -------------------------------------------------------------------------------------------------------------- %
\subsection{Using multiplicative exponential weights}
In this subsection, we consider multiplicative exponential weights, implying that $w_i \sim Exp(1)$ and $\cT(y_i,w_i) = y_i \cdot w_i$. We show that in this setting \GWB is mathematically equivalent to TS for Bernoulli and more generally categorical rewards. 
% -------------------------------------------------------------------------------------------------------------- %
\subsubsection{Proof for Proposition~\ref{prop:bernoulli-ts-equivalence}}
\label{app:bernoulli}
\begin{proof}
Recall that the bootstrap sample is given as:
\begin{align*}
\tth & = \frac{ \sum_{i = 1}^{n} [ w_i \cdot y_i ] + \sum_{i = 1}^{\alpha_0} [w_i] }{ \sum_{i = 1}^{n + \alpha_0 + \beta_0} w_i } 
\end{align*}
To characterize the distribution of $\tth$, let us define $P_{0}$ and $P_{1}$ as the sum of weights for the positive and negative examples respectively. Formally,
\begin{align*}
P_{0} & = \sum_{i = 1}^{n} \left[ w_i \cdot \cI \{ y_i = 0 \} \right] + \sum_{i = 1}^{\alpha_0} [w_i]  \\
P_{1} & = \sum_{i = 1}^{n} \left[ w_i \cdot \cI \{ y_i = 1 \} \right] + \sum_{i = 1}^{\beta_0}  [w_i] 
\end{align*}
The sample $\tth$ can then be rewritten as:
\begin{align*}
\tth = \frac{P_1}{P_0 + P_1}
\end{align*}
Observe that $P_0$ (and $P_1$) is the sum of $\alpha + \alpha_0$ (and $\beta + \beta_0$ respectively) exponentially distributed random variables. Hence, $P_{0} \sim Gamma(\alpha + \alpha_0, 1)$ and $P_{1} \sim Gamma(\beta + \beta_0, 1)$. This implies that $\tth \sim Beta(\alpha + \alpha_0, \beta + \beta_0)$. 

When using the $Beta(\alpha_0,\beta_0)$ prior for TS, the corresponding posterior distribution on observing $\alpha$ positive examples and $\beta$ negative examples is $Beta(\alpha + \alpha_0, \beta + \beta_0)$. Hence computing $\tth$ according to \GWB is the equivalent to sampling from the Beta posterior. Hence, \GWB with multiplicative exponential weights is mathematically equivalent to TS.  
\end{proof}
% -------------------------------------------------------------------------------------------------------------- %
\subsubsection{Categorical reward distribution}
\label{app:categorical}
\begin{proposition}
Let the rewards $y_i \sim Multinomial(\theta^{*}_1, \theta^{*}_2, \ldots \theta^{*}_C)$ where $C$ is the number of categories and $\theta^{*}_{i}$ is the probability of an example belonging to category $i$. In this case, weighted bootstrapping with $w_i \sim Exp(1)$ and the transformation $y_i \rightarrow y_i \cdot w_i$ results in $\tth \sim Dirichlet( n_1 + \tilde{n}_1, n_2 + \tilde{n}_2, \ldots n_{c} + \tilde{n}_c)$ where $n_i$ is the number of observations and $\tilde{n}_i$ is the pseudo-count for category $i$. In this case, \GWB is equivalent to Thompson sampling under the $Dirichlet(\tilde{n}_1,\tilde{n}_2, \ldots \tilde{n}_{C})$ prior.
\begin{proof}
Like in the Bernoulli case, for all $c \in C$, define $P_{c}$ as follows:
\begin{align*}
P_{c} & = \sum_{i = 1}^{n_c} \left[ w_i \cdot \cI \{ y_i = c \} \right] + \sum_{i = 1}^{\tilde{n}_c} [w_i]  
\end{align*}
The bootstrap sample $\tth$ consists of $C$ dimensions i.e. $\tth = (\tth_{1}, \tth_{2} \ldots \tth_{C})$ such that:
\begin{align*}
\tth_{c} = \frac{P_{c}}{\sum_{i = 1}^{C} P_{c}}
\end{align*}
Note that $\sum_{c = 1}^{C} \tth_{c} = 1$. Observe that $P_c$ is the sum of $n_c + \tilde{n}_c$ exponentially distributed random variables. Hence, $P_{c} \sim Gamma(n_c + \tilde{n}_c, 1)$. This implies that $\tth \sim Dirichlet(n_1 + \tilde{n}_{1}, n_2  +\tilde{n}_{2} \ldots n_k + \tilde{n}_{k})$. 

When using the $Dirichlet(\tilde{n}_{1}, \tilde{n}_2, \ldots \tilde{n}_{C})$ prior for TS, the corresponding posterior distribution is $Dirichlet(n_1 + \tilde{n}_{1}, n_2  +\tilde{n}_{2} \ldots n_k + \tilde{n}_{k})$. Hence computing $\tth$ according to \GWB is the equivalent to sampling from the Dirichlet posterior. Hence, \GWB with multiplicative exponential weights is mathematically equivalent to TS.  
\end{proof}
\end{proposition}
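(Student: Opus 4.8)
The plan is to mirror the Bernoulli argument of Proposition~\ref{prop:bernoulli-ts-equivalence}, replacing the Beta/Gamma calculation by its Dirichlet/Gamma analog. First I would write down the weighted log-likelihood for categorical observations. Treating each $y_i$ as indicating membership in one of the $C$ categories, the ordinary log-likelihood subject to the simplex constraint $\sum_{c=1}^{C} \theta_c = 1$ is $\cL(\theta) = \sum_{i} \sum_{c=1}^{C} \I\{y_i = c\} \log \theta_c$, and the transformation $y_i \to w_i \cdot y_i$ with $w_i \sim Exp(1)$ simply re-weights each observation's contribution by $w_i$. Including the pseudo-counts $\tilde{n}_c$ (each carrying its own exponential weight), the bootstrapped log-likelihood collapses to $\tilde{\cL}(\theta) = \sum_{c=1}^{C} P_c \log \theta_c$ with $P_c$ as defined in the statement. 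Maximizing this over the probability simplex via a Lagrange multiplier for $\sum_c \theta_c = 1$ gives the closed form $\tth_c = P_c / \sum_{c'=1}^{C} P_{c'}$, exactly the estimator asserted in the proposition and the natural generalization of the ratio $P_1/(P_0+P_1)$ from the Bernoulli case.

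The distributional step is the heart of the argument and runs parallel to the Bernoulli proof. Each $P_c$ is a sum of $n_c + \tilde{n}_c$ independent $Exp(1)$ variables, so $P_c \sim Gamma(n_c + \tilde{n}_c, 1)$, and since the weights are mutually independent the $P_c$ are independent Gammas sharing a common scale. I would then invoke the standard fact that if $P_1, \dots, P_C$ are independent with $P_c \sim Gamma(a_c, 1)$, then the normalized vector $(P_1, \dots, P_C)/\sum_c P_c$ follows $Dirichlet(a_1, \dots, a_C)$. Applied here this immediately yields $\tth \sim Dirichlet(n_1 + \tilde{n}_1, \dots, n_C + \tilde{n}_C)$.

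Finally I would match this to Thompson sampling: the Dirichlet is conjugate to the categorical likelihood, so a $Dirichlet(\tilde{n}_1, \dots, \tilde{n}_C)$ prior updated by $n_c$ observed examples in each category $c$ produces the posterior $Dirichlet(n_1 + \tilde{n}_1, \dots, n_C + \tilde{n}_C)$, which coincides exactly with the law of the bootstrap sample, establishing the claimed equivalence. I expect no serious obstacle here; the only place demanding care is setting up the weighted, simplex-constrained MLE so that the pseudo-counts enter the Gamma shape parameters precisely as $n_c + \tilde{n}_c$. The Gamma-to-Dirichlet normalization is the essential (though standard) ingredient that drives the entire result, generalizing the Gamma-to-Beta identity used in the two-category case.
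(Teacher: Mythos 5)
Your proposal is correct and follows essentially the same route as the paper's proof: independent $Exp(1)$ weights summing to $Gamma(n_c + \tilde{n}_c, 1)$ variables $P_c$, the Gamma-to-Dirichlet normalization fact, and Dirichlet--categorical conjugacy to match Thompson sampling. The only difference is that you explicitly derive the estimator $\tth_c = P_c / \sum_{c'} P_{c'}$ as the simplex-constrained maximizer of the weighted log-likelihood via a Lagrange multiplier, a step the paper simply asserts by analogy with the Bernoulli case; this is a welcome but minor elaboration, not a different argument.
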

% -------------------------------------------------------------------------------------------------------------- %
\subsection{Using additive normal weights}
In this subsection, we consider additive normal weights, implying that $w_i \sim N(0,1)$ and $\cT(y_i, w_i) = y_i + w_i$. We show that in this setting \GWB is mathematically equivalent to TS for Gaussian rewards. 
% -------------------------------------------------------------------------------------------------------------- %
\subsubsection{Normal}
\label{app:normal}
\begin{proposition}
Let the rewards $y_i \sim Normal(\langle \bx_i, \theta^{*} \rangle, 1)$ where $\bx_{i}$ is the corresponding feature vector for point $i$. If $X$ is the $n \times d$ matrix of feature vectors and $\by$ is the vector of labels for the $n$ observations, then weighted bootstrapping with $w_i \sim N(0,1)$ and using the transformation $y_i \rightarrow y_i + w_i$ results in $\tth \sim N(\hth, \Sigma)$ where $\Sigma^{-1} = X^{T} X$ and $\hth = \Sigma \left[ X^{T} \by \right]$. In this case, \GWB is equivalent to Thompson sampling under the uninformative prior $\theta \sim N(0,\infty)$.
\begin{proof}
The probability of observing point $i$ when the mean is $\theta$ and assuming unit variance, 
\begin{align*}
\cP(y_i | \bx_i, \theta ) & = N( \langle \bx_i, \theta \rangle ,1) \\
\intertext{The log-likelihood for observing the data is equal to:}
\cL(\theta) &= \frac{-1}{2} \sum_{i = 1}^{n} \left( y_i - \langle x_i, \theta \rangle \right)^{2} \\
\intertext{The MLE has the following closed form solution:}
\hth & = \left( X^{T} X \right)^{-1} X^{T} \by
\intertext{The bootstrapped log-likelihood is given as:}
\tilde{\cL}(\theta) &= \frac{-1}{2} \sum_{i = 1}^{n} \left( y_i + w_i - \langle x_i, \theta \rangle \right)^{2} \\
\intertext{If $\bw = [w_1, w_2 \ldots w_n]$ is the vector of weights, then the bootstrap sample can be computed as:}
\tth & = \left( X^{T} X \right)^{-1} X^{T} \left[ \by + \bw \right]
\end{align*}
The bootstrap estimator $\tth$ has a Gaussian distribution since it is a linear combination of Gaussian random variables ($\by$ and $\bw$). We now calculate the first and second moments for $\tth$ wrt to the random variables $\bw$. 
\begin{align*}
\E[\tth] & = \E_{\bw} \left[ \left( X^{T} X \right)^{-1} X^{T} \left[ \by + \bw \right] \right] \\
& = \left( X^{T} X \right)^{-1} X^{T} \by + \E \left[ \left( X^{T} X \right)^{-1} X^{T} \bw \right] \\
& = \hth + \left( X^{T} X \right)^{-1} X^{T} \E[ \bw ] \\
\implies \E_{\bw}[\tth] & = \hth
\end{align*}
\begin{align*}
\E_{\bw} \left[ (\tth - \hth)(\tth - \hth)^{T} \right] & = \E_{\bw} \left[ \left[ (X^T X)^{-1} X^{T} \bw \right] \left[ (X^T X)^{-1} X^{T} \bw \right]^{T} \right]\\
& = \E \left[ \left[(X^T X)^{-1} X^{T} \bw \bw^{T} X (X^T X)^{-T} \right] \right] \\
& = \E \left[ (X^T X)^{-1} X^{T} \bw \bw^{T} X (X^T X)^{-T}  \right] \\
& = (X^T X)^{-1} X^{T}  E \left[ \bw \bw^{T} \right] X (X^T X)^{-T} \\
& = (X^T X)^{-1} X^{T} X (X^T X)^{-T} \tag{Since $\E[\bw \bw^{T}] = I_{d}$} \\
& = (X^T X)^{-1} (X^{T} X) (X^T X)^{-1} \\
\implies \E_{\bw} \left[ (\tth - \hth)(\tth - \hth)^{T} \right] & = (X^T X)^{-1} = \Sigma 
\end{align*}
Thus $\tth \sim N(\hth, \Sigma)$. When using the uninformative prior $N(0, \infty I_{d})$ prior for TS, the posterior distribution on observing $\cD$ is equal to $N(\hth, \Sigma)$. Hence computing $\tth$ according to \GWB is the equivalent to sampling from the the Gaussian posterior. Hence, \GWB with additive normal weights is mathematically equivalent to TS.  
\end{proof}
\end{proposition}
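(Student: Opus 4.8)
The plan is to exploit that, for unit-variance Gaussian rewards, the log-likelihood is quadratic in $\theta$, so the bootstrap maximizer has a closed form that is affine in the random weights. First I would write the bootstrapped objective as $\tilde{\cL}(\theta) = -\tfrac{1}{2}\sum_{i=1}^{n} (y_i + w_i - \langle x_i, \theta\rangle)^2$, which is precisely the ordinary least-squares loss applied to the perturbed label vector $\by + \bw$. Setting its gradient to zero yields the normal equations $X^{T}X\,\theta = X^{T}(\by + \bw)$, and hence, assuming $X^{T}X$ is invertible, $\tth = (X^{T}X)^{-1} X^{T}(\by + \bw)$. Note that the MLE on the unperturbed data is the $\bw = 0$ instance, $\hth = (X^{T}X)^{-1}X^{T}\by$.

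The key observation is that the bootstrap randomness enters only through $\bw \sim N(0, I_n)$, with the realized labels $\by$ held fixed. Since $\tth$ is an affine image of a Gaussian vector, it is itself Gaussian, so it suffices to match its first two moments. Taking $\E_{\bw}$ and using $\E[\bw] = 0$ gives $\E_{\bw}[\tth] = (X^{T}X)^{-1}X^{T}\by = \hth$. For the covariance I would write $\tth - \hth = (X^{T}X)^{-1}X^{T}\bw$, so that $\E_{\bw}[(\tth - \hth)(\tth - \hth)^{T}] = (X^{T}X)^{-1}X^{T}\,\E[\bw\bw^{T}]\,X (X^{T}X)^{-1}$; substituting $\E[\bw\bw^{T}] = I_n$ collapses this to $(X^{T}X)^{-1} = \Sigma$. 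This establishes $\tth \sim N(\hth, \Sigma)$.

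For the Thompson sampling equivalence I would invoke the standard conjugate-Gaussian calculation: under a prior $\theta \sim N(0, \tau I_d)$ with unit-variance linear-Gaussian observations, the posterior is $N\!\big((X^{T}X + \tau^{-1}I_d)^{-1}X^{T}\by,\ (X^{T}X + \tau^{-1}I_d)^{-1}\big)$, and letting the prior variance $\tau \to \infty$ (the uninformative $N(0,\infty)$ limit) sends $\tau^{-1} \to 0$ and recovers exactly $N(\hth, \Sigma)$. Since a single bootstrap draw of $\tth$ has this same law, it is distributionally identical to one posterior draw, which is what Algorithm~\ref{algo:bb} uses in place of the TS sample; the argument is unchanged whether or not features are present, so it covers the linear bandit case. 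I do not anticipate a genuine obstacle here: the only points requiring care are the bookkeeping that all bootstrap expectations are taken over $\bw$ alone and not over the data, and the mild regularity that $X^{T}X$ be nonsingular (otherwise one replaces the inverse by a pseudoinverse, which does not affect the equivalence).
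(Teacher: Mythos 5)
Your proposal is correct and follows essentially the same route as the paper's proof: solving the normal equations for the perturbed labels to get $\tth = (X^{T}X)^{-1}X^{T}(\by + \bw)$, noting Gaussianity of this affine map of $\bw$, matching the mean $\hth$ and covariance $(X^{T}X)^{-1}$, and identifying this law with the TS posterior under the uninformative prior. Your only additions---deriving the uninformative-prior posterior as the $\tau \to \infty$ limit of the conjugate $N(0,\tau I_d)$ posterior, and flagging the invertibility of $X^{T}X$---are details the paper asserts without proof, not a different argument.
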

\section{Additional Experimental Results}
\label{app:additional-results}
\begin{figure*}[!ht]
\centering
        \subfigure[Bernoulli]
        {
			\includegraphics[width=0.23\textwidth]{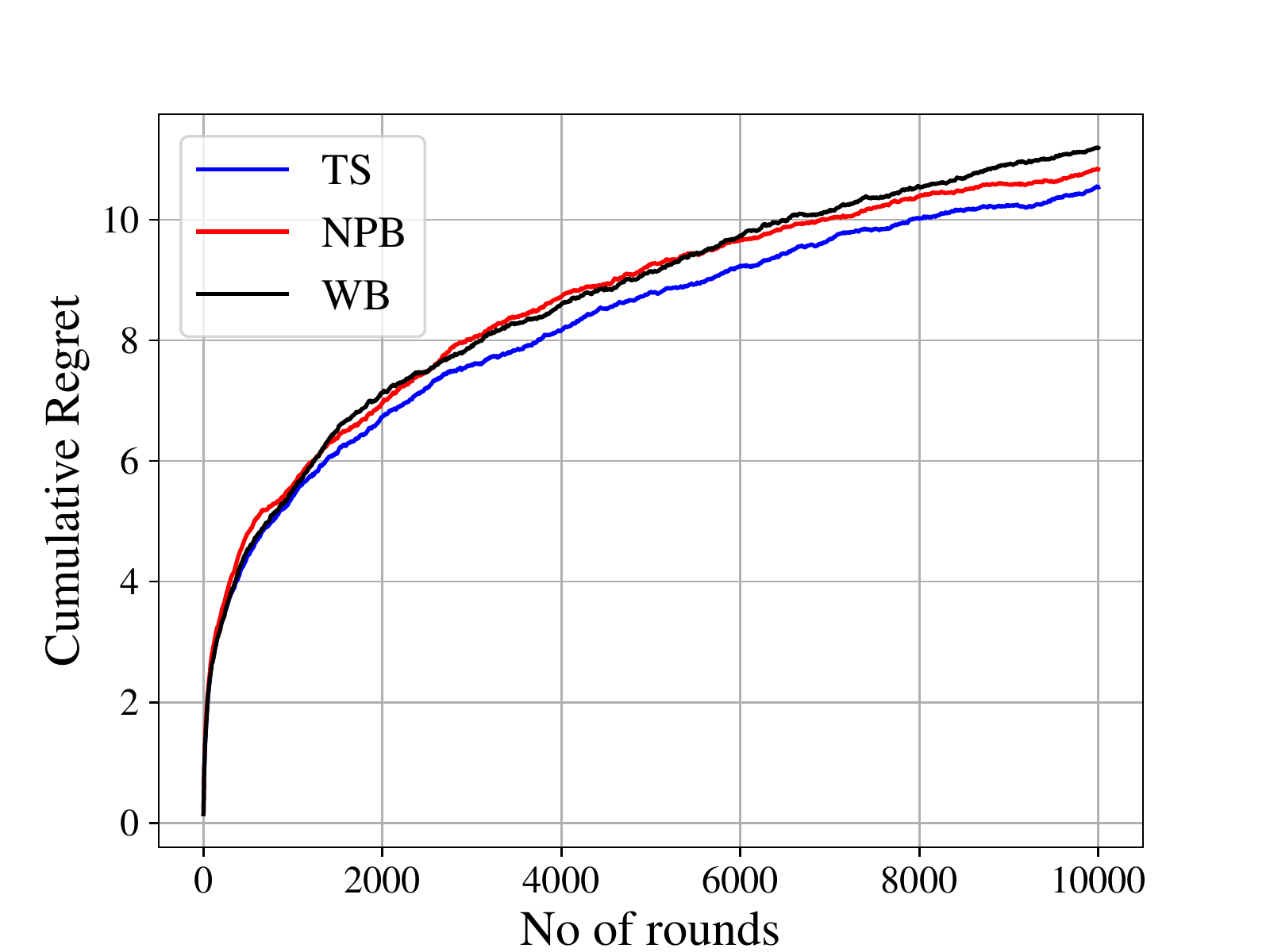}
			\label{fig:mab-bernoulli-2}
        }        
        \subfigure[Truncated-Normal]
        {
			\includegraphics[width=0.23\textwidth]{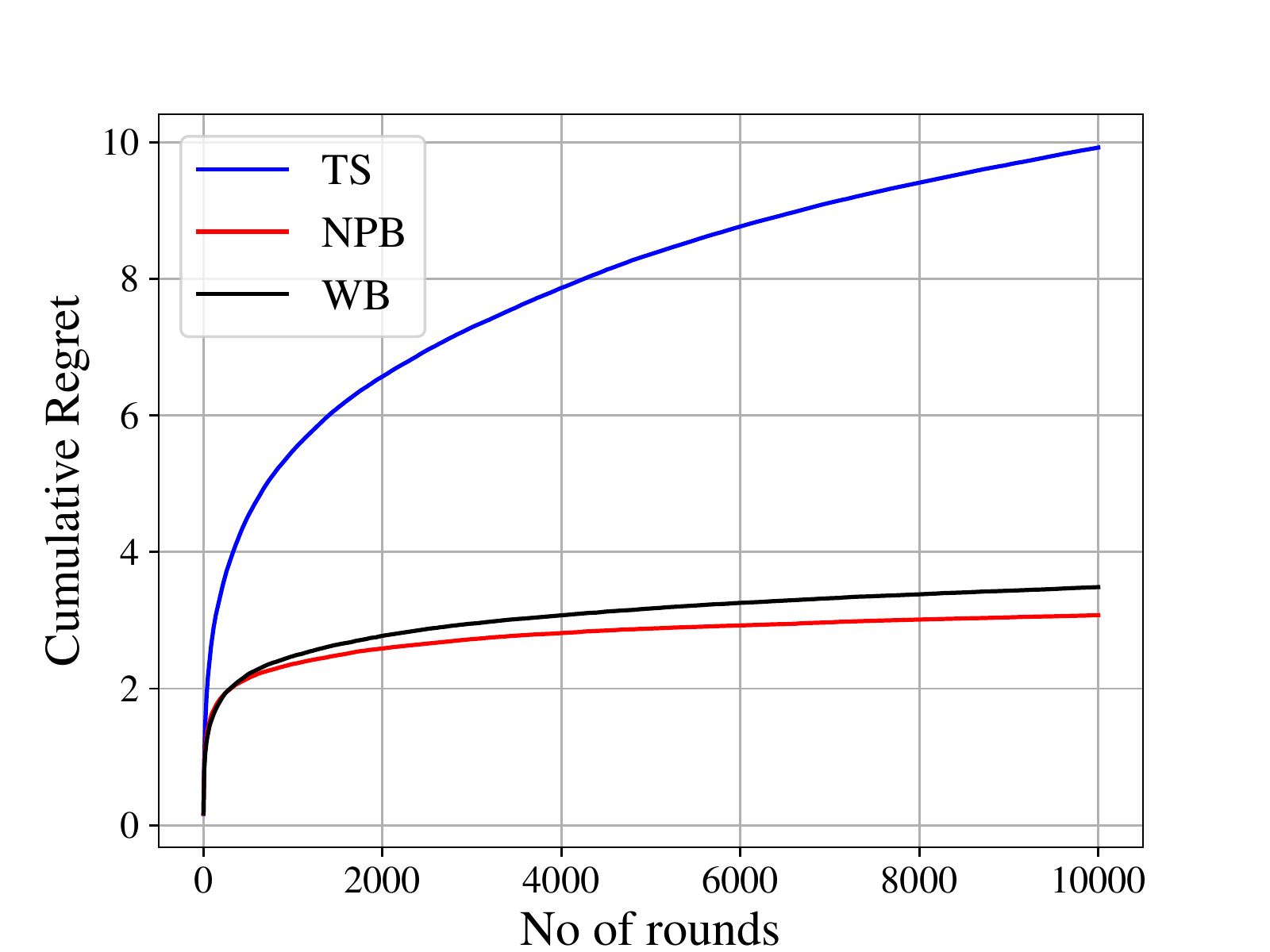}
			\label{fig:mab-truncated-normal-2}
        }              
       \subfigure[Beta]
        {
			\includegraphics[width=0.23\textwidth]{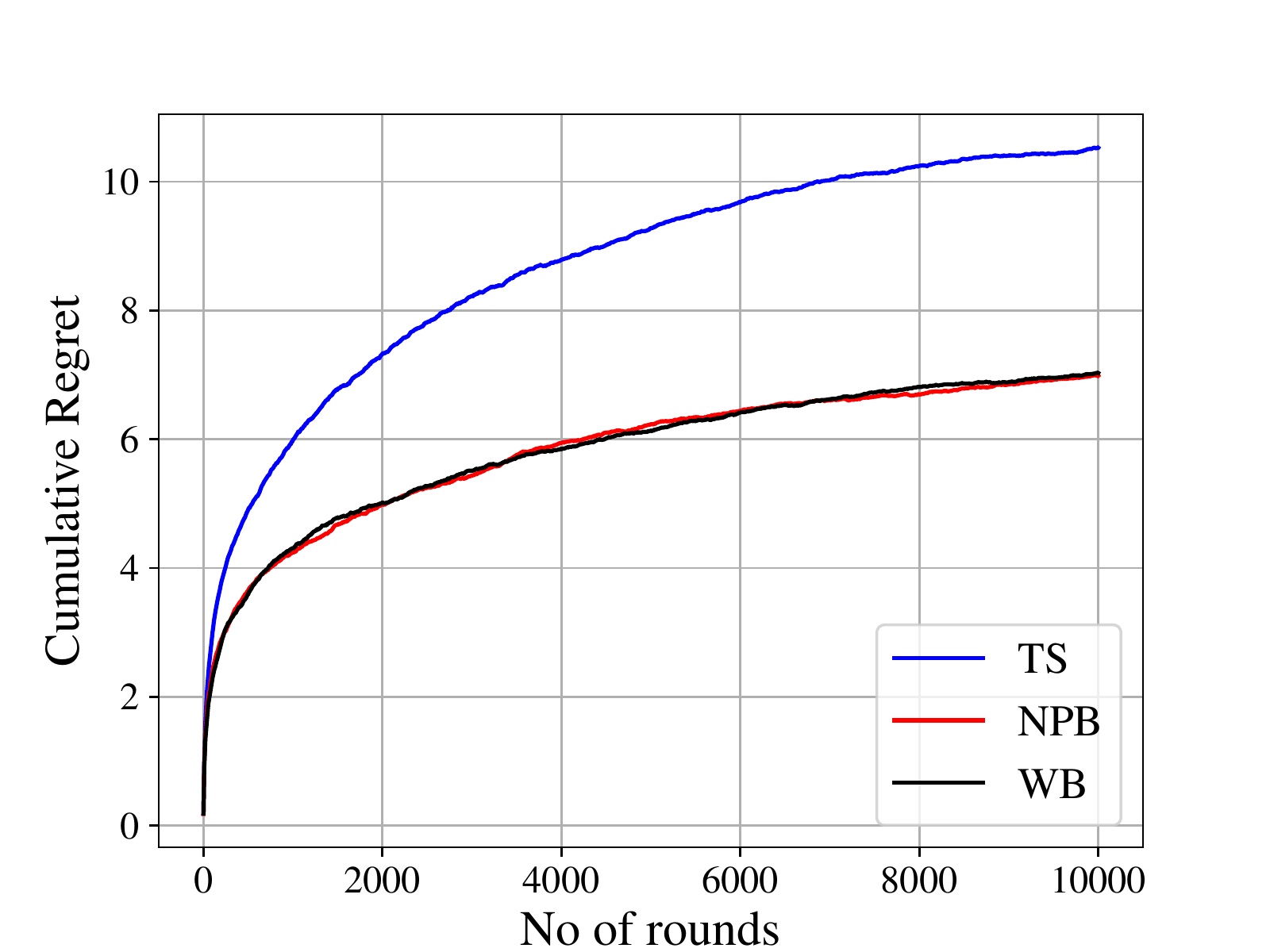}
			\label{fig:mab-beta-2}
        }              	
        \subfigure[Triangular]
        {
			\includegraphics[width=0.23\textwidth]{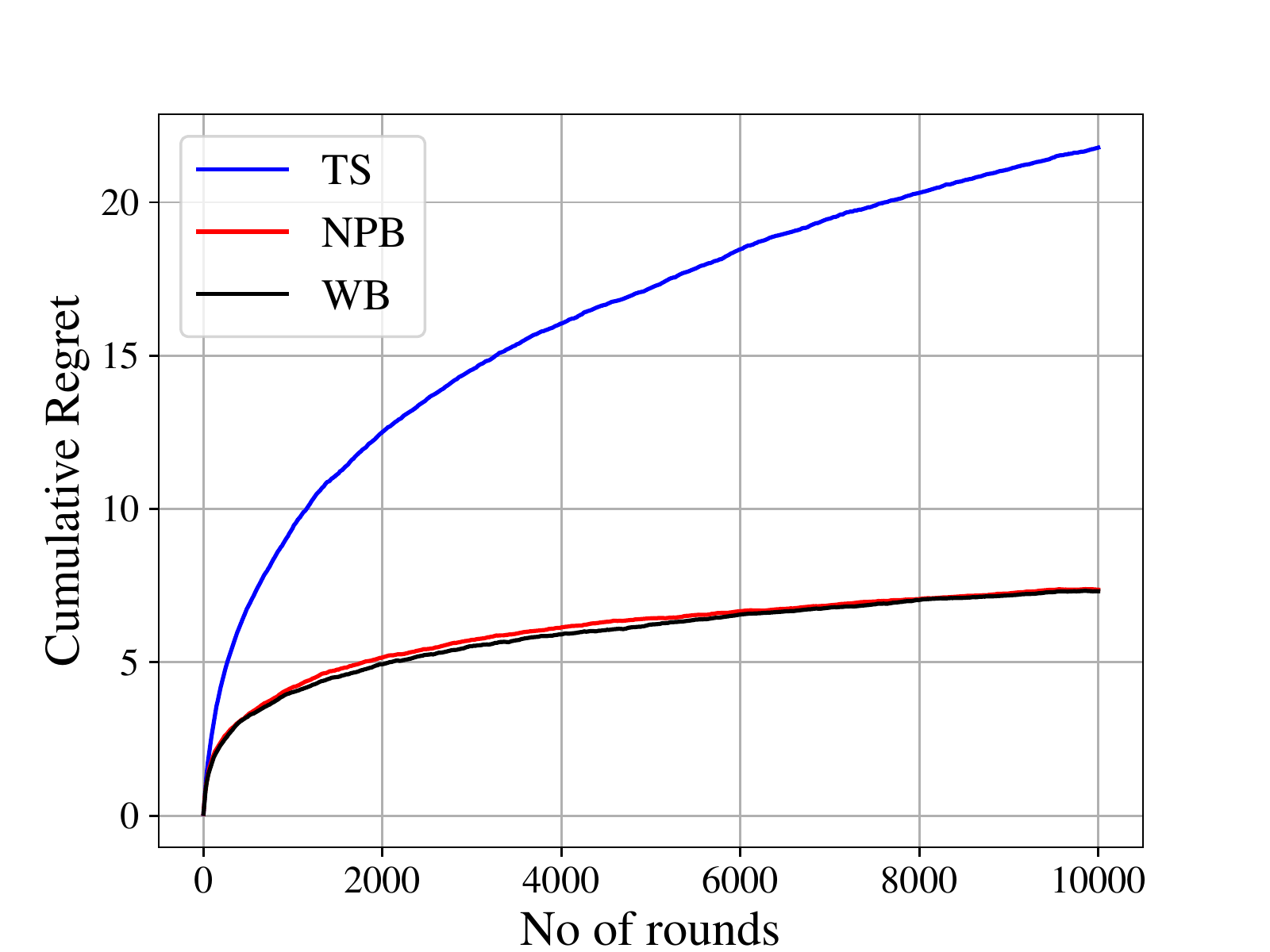}
			\label{fig:mab-triangular-2}
        }              	
      	\caption{Cumulative Regret for TS, \NPB and \GWB in a bandit setting $K = 2$ arms for (a) Bernoulli (b) Truncated-Normal in $[0,1]$ (c) Beta (d) Triangular in $[0,1]$ reward distributions}
\end{figure*}    
% ------------------------------------------------------------------------------------------------------------ %
\begin{figure*}[!ht]
\centering
        \subfigure[Bernoulli]
        {
			\includegraphics[width=0.23\textwidth]{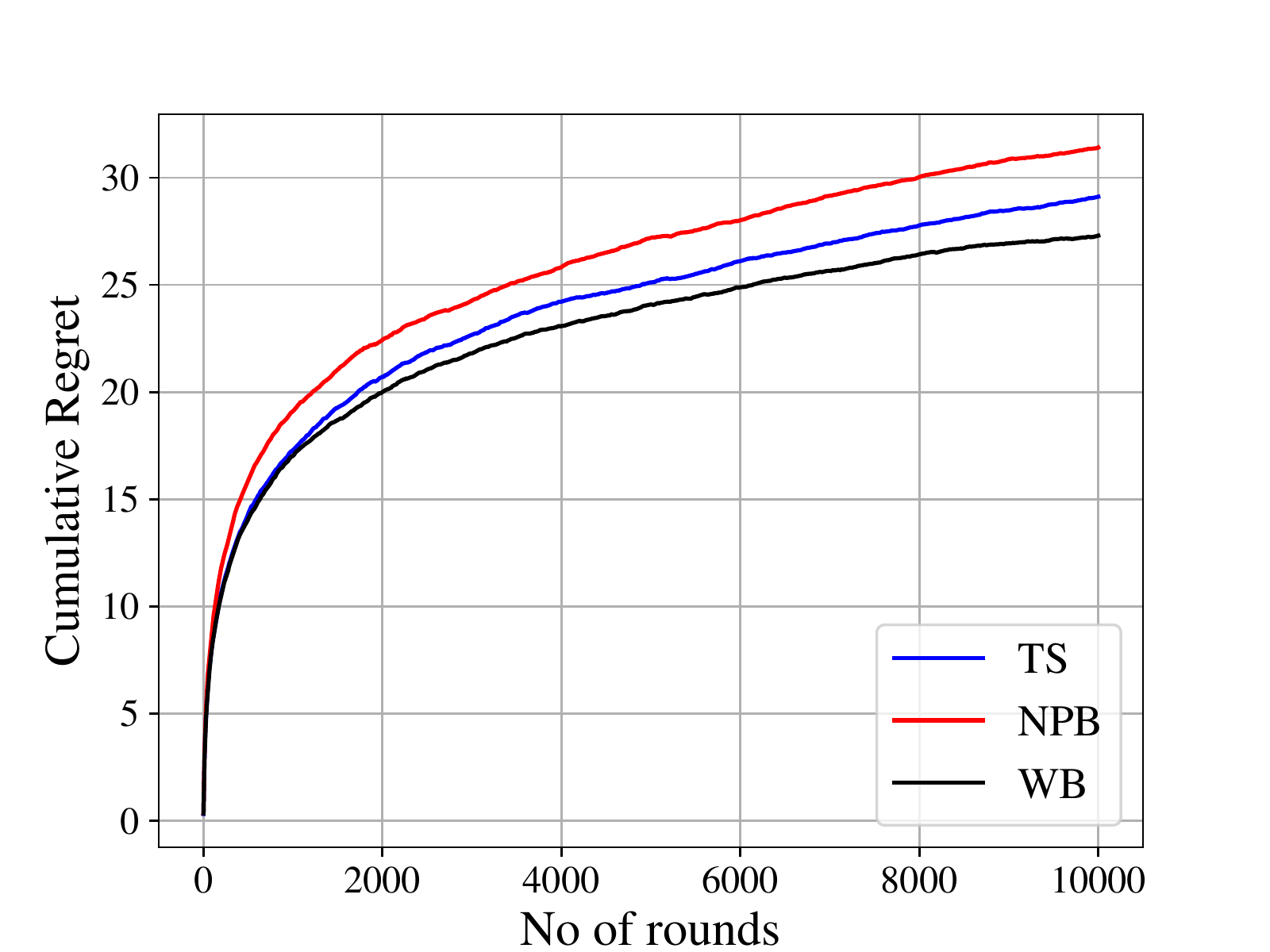}
			\label{fig:mab-bernoulli-5}
        }        
        \subfigure[Truncated-Normal]
        {
			\includegraphics[width=0.23\textwidth]{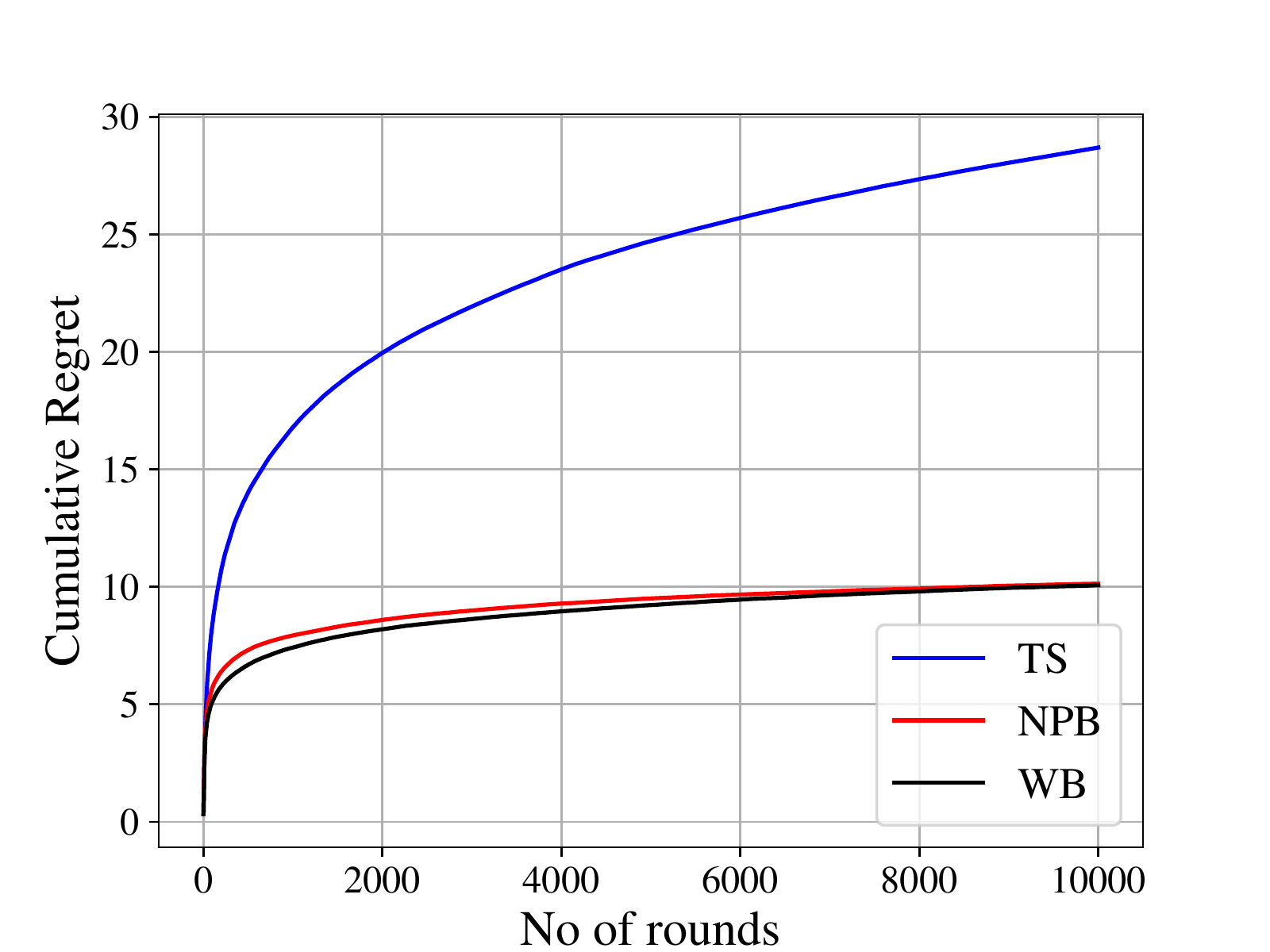}
			\label{fig:mab-truncated-normal-5}
        }              
       \subfigure[Beta]
        {
			\includegraphics[width=0.23\textwidth]{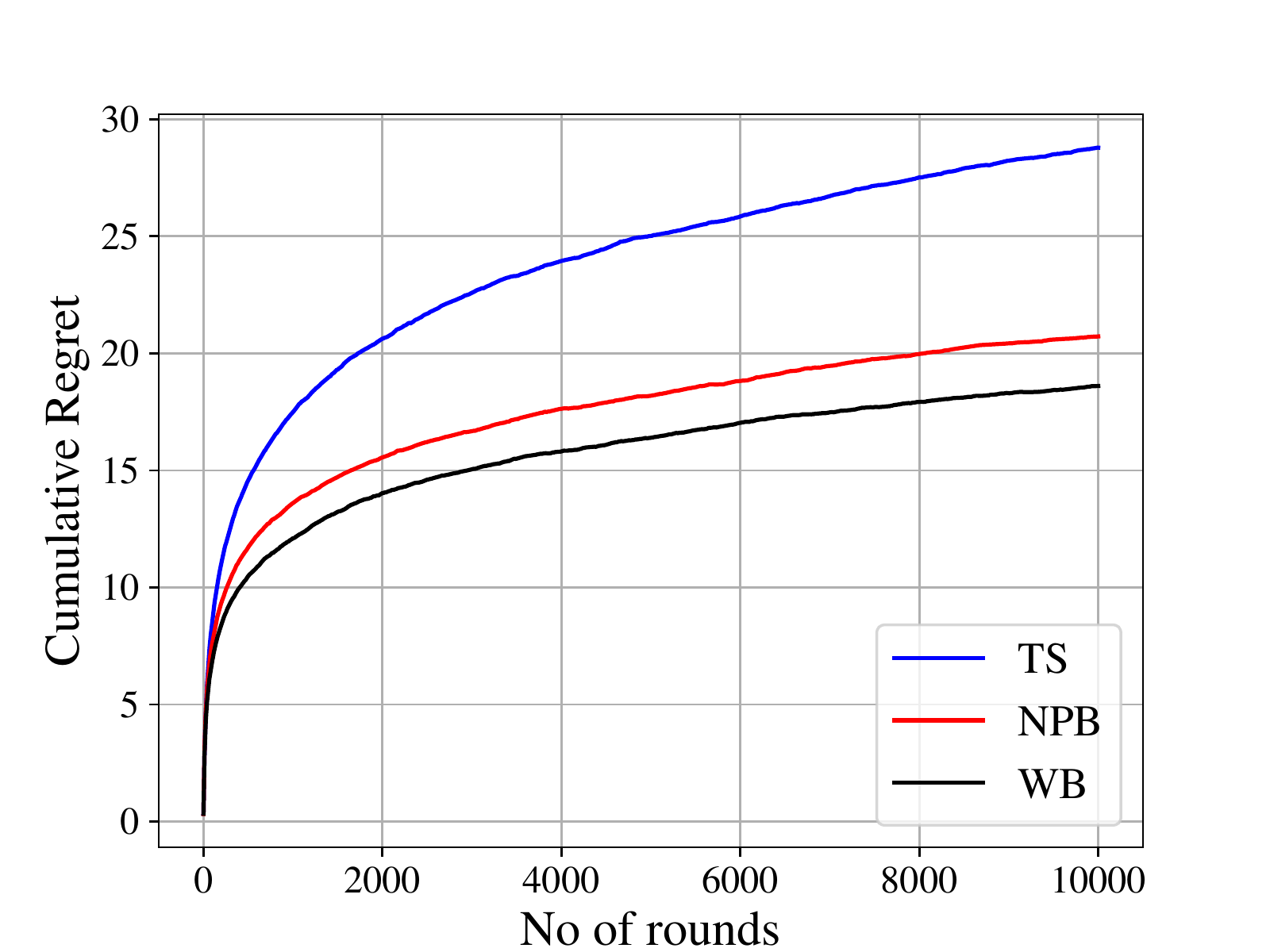}
			\label{fig:mab-beta-5}
        }              	
        \subfigure[Triangular]
        {
			\includegraphics[width=0.23\textwidth]{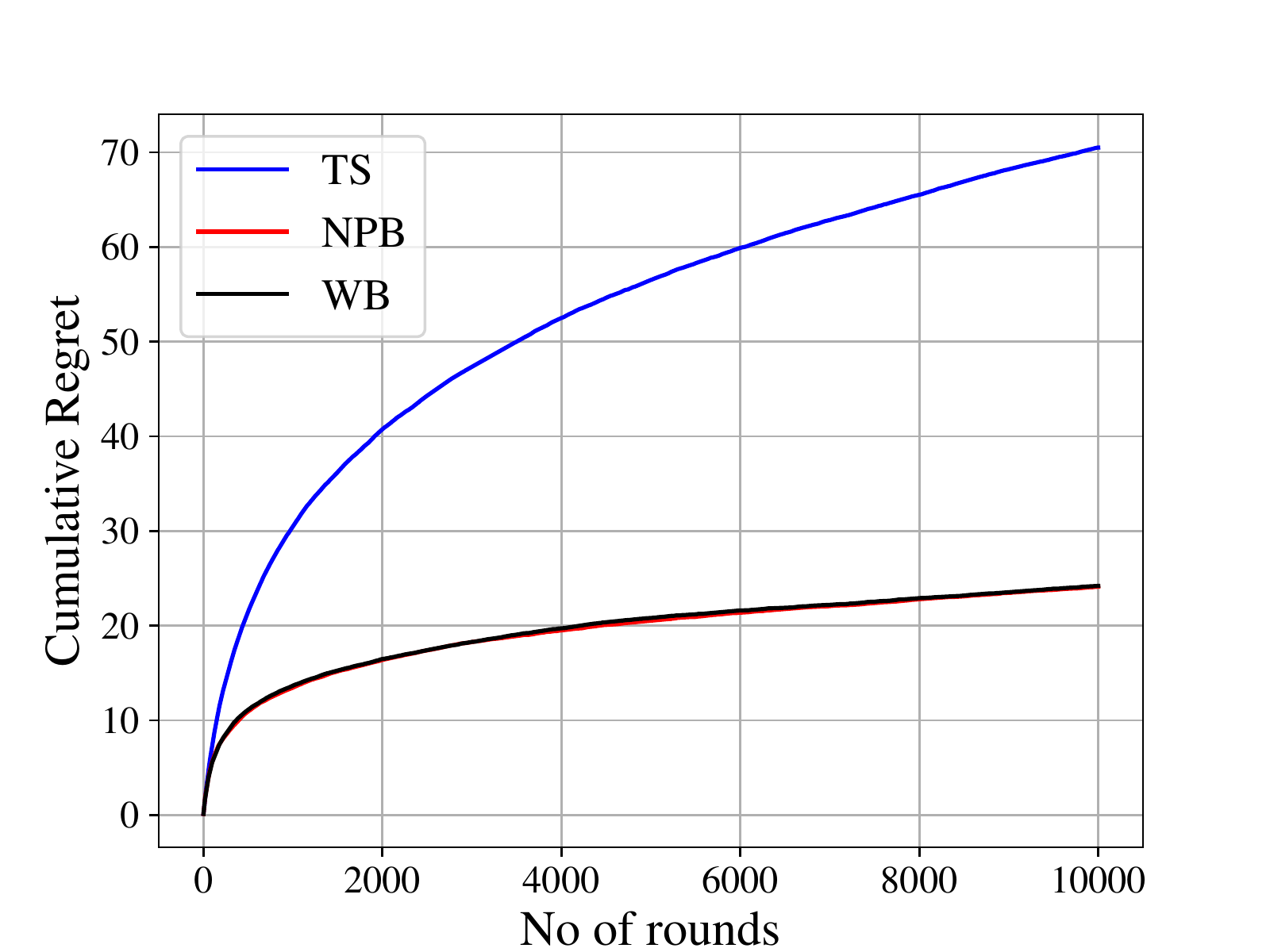}
			\label{fig:mab-triangular-5}
        }              	
 		\caption{Cumulative Regret for TS, \NPB and \GWB in a bandit setting $K = 5$ arms for (a) Bernoulli (b) Truncated-Normal in $[0,1]$ (c) Beta (d) Triangular in $[0,1]$ reward distributions}
\end{figure*}    
% ------------------------------------------------------------------------------------------------------------ %

\end{document}